\documentclass[preprint]{l4dc2026}

\title[Spectral Clipping for Learning Stable Linear and Latent-Linear Dynamical Systems]{On the Surprising Effectiveness of Spectral Clipping in Learning Stable Linear and Latent-Linear Dynamical Systems}
\usepackage{times}

\usepackage[utf8]{inputenc} 
\usepackage[T1]{fontenc}    
\usepackage{hyperref}       
\usepackage{url}            
\usepackage{booktabs}       
\usepackage{amsfonts}       
\usepackage{nicefrac}       
\usepackage{microtype}      
\usepackage{xcolor}         
\usepackage{array}
\usepackage{subcaption}
\usepackage{multirow}
\usepackage{graphicx}
\usepackage{amsmath}
\usepackage{bm}
\usepackage{wrapfig}
\usepackage{titlesec}
\usepackage{mathrsfs}
\usepackage{mathtools}
\usepackage{xcolor}
\usepackage{booktabs}

\newcommand{\LS}[1]{\textcolor{gray}{#1}}


\coltauthor{%
 \Name{Hanyao Guo}\thanks{Equal contribution, name ordered alphabetically.}  \Email{hguo323@gatech.edu}\\
 \Name{Yunhai Han}\footnotemark[1] \Email{yhan389@gatech.edu}\\
 \Name{Harish Ravichandar} \Email{harish.ravichandar@gatech.edu}\\
 \addr Georgia Institute of Technology,
  Atlanta, GA 30332%
 \vspace{-15pt}
}

\begin{document}

\maketitle

\vspace{-15pt}
\begin{abstract}
When learning stable linear dynamical systems from data, three important properties are desirable: i) predictive accuracy, ii) verifiable stability, and iii) computational efficiency. Unconstrained minimization of prediction errors leads to high accuracy and efficiency but cannot guarantee stability. Existing methods to enforce stability often preserve accuracy, but do so only at the cost of increased computation. In this work, we investigate if a seemingly-naive procedure can \textit{simultaneously} offer all three desiderata. Specifically, we consider a post-hoc procedure in which we surgically manipulate the spectrum of the linear system after it was learned using unconstrained least squares. We call this approach \textit{spectral clipping} (SC) as it involves eigen decomposition and subsequent reconstruction of the system matrix after any eigenvalues whose magnitude exceeds one have been clipped to one (without altering the eigenvectors).  We also show that SC can be readily combined with Koopman operators to learn nonlinear dynamical systems that can generate stable predictions of nonlinear phenomena, such as those underlying complex dexterous manipulation skills involving multi-fingered robotic hands. Through comprehensive experiments involving two different applications and publicly available benchmark datasets, we show that this simple technique can efficiently learn highly-accurate predictive dynamics that are provably-stable. Notably, we find that SC can match or outperform strong baselines while being \textit{orders-of-magnitude} faster. Finally, we find that SC can learn stable robot policies even when the training data includes unsuccessful or truncated demonstrations. Our code and datasets can be found at \href{https://github.com/GT-STAR-Lab/spec_clip}{https://github.com/GT-STAR-Lab/spec\_clip}. 
\end{abstract}

\begin{keywords}%
  Linear Dynamical Systems, Stability, System Identification.
\end{keywords}

\section{Introduction}

In spite of the growing dominance of deep neural networks, modeling and learning of linear dynamical systems (LDS) remain relevant due to the availability of analytical solutions, significantly lower computational burden, and ease of analyses and inspection. LDS continue to find use in diverse applications such as computer vision~\citep{chan2005probabilistic, boots2007constraint, chien2022gpu, wang2024localized} and time series prediction~\citep{liu2024koopa, takeishi2017learning}. Even if the underlying dynamics are nonlinear, researchers are increasingly utilizing tools from linear systems theory by leveraging the Koopman Operator theory~\citep{Koopman1931Koopman, williams2015data, mezic2020koopman} to approximate nonlinear systems with linear systems in a higher-dimensional lifted state space. These techniques further increase the relevance and importance of learning LDS in complex applications, such as 
legged robotics~\citep{kim2024learning, li2024continual}, 
robotic manipulation~\citep{Bruder2021Soft,han2023KODex, han2024learning, chen2024korol}, atmospheric sciences~\citep{navarra2021estimation, millard2024deep, zheng2022hybrid}, and fluid dynamics~\citep{mezic2013analysis, schmid2022dynamic}. 

When time-series data associated with system evolution is available, learning LDS can be considered as a form of self-supervised learning and can be solved efficiently by minimizing a regression objective~\citep{ljung1998system}. However, such unconstrained optimization methods tend to ignore important properties of the underlying systems, such as stability. Consequently, even if the underlying system is known to be stable, the stability of learned system could be highly sensitive to the training data distribution, potentially leading to unstable solutions and catastrophic long-term predictions.


A number of methods have been developed to enforce stability of learned LDS and reduce sensitivity to training data~\citep{lacy2003subspace_conf, lacy2003subspace, boots2007constraint, huang2016learning, gillis2020note, mamakoukas2020memory, mamakoukas2023learning}. These methods take one of three approaches: a) minimizing the regression objective under stability constraints~\citep{lacy2003subspace_conf, lacy2003subspace}, b) iterating between solving 
an unconstrained problem and refining the solution under stability constraints~\citep{boots2007constraint, huang2016learning}, and c) iterating between characterizing the stable matrix and optimizing the characterizations with the regression objective~\citep{gillis2020note, mamakoukas2020memory, mamakoukas2023learning}. 
In general, these methods inevitably introduce a considerable computational burden, potentially countering one of the primary advantages of linear techniques. 

In this work, we investigate a simple yet surprisingly effective method, \textit{Spectral Clipping} (\textit{SC}), to learn stable LDS from data with little extra computation. SC involves a straightforward implementation: 1) compute the system matrix via unconstrained least-squares, 2) perform eigen decomposition of the system matrix and extract all eigenvalues, 3) \textit{clip} the magnitude of eigenvalues only when larger than one to one, and 4) reconstruct the system evolution matrix using the clipped eigenvalues and the \textit{unchanged} eigenvectors. 

In contrast to existing approaches discussed above, \textit{SC} offers \textit{stability-by-construction} without requiring constrained optimization or iterative updates.  
\textit{SC} can efficiently scale to \textit{high-dimensional} systems (e.g., image prediction). Further, seminal works on the Koopman operator~\citep{rowley2009spectral, mezic2013analysis, mezic2020koopman} reveal fundamental connections between the spectrum of the Koopman matrix and desirable properties in learned \textit{nonlinear systems}. We show that \textit{SC} can be combined with Koopman operators to enhance the predictive accuracy and stability of learned nonlinear dynamical systems (e.g., robotic systems).

While it can be readily seen that SC ensures stability by directly manipulating the eigenvalues, it might seem puzzling that it does not hurt the model's predictive accuracy.  
To explain SC's surprising effectiveness, we provide practical intuitions and theoretical insights (e.g., see Fig.~\ref{fig:Geometric_interpretation} for a geometric interpretation of \textit{SC}). Indeed, prior works have identified several linear techniques that defy expectations: controlling the singular values of linear layers improves training stability and generalization~\citep{boroojeny2024spectrum,sedghi2018singular, senderovich2022towards}; retraining only the last linear layer improves robustness to spurious correlations~\citep{kirichenko2022last}; and Low-Rank Adaptation (LoRA) dramatically improves efficiency~\citep{hu2022lora}. 
We conducted comprehensive experiments involving benchmark datasets across different domains (two video prediction datasets~\citep{chan2005probabilistic, hadji2018new}, and a robot model learning dataset~\citep{gaz2019dynamic}) and strong baselines (CG~\citep{boots2007constraint}, WLS~\citep{huang2016learning}, and SOC~\citep{mamakoukas2020memory}). Our results reveal that \textit{SC} consistently achieves comparable (if not better) predictive accuracy, and generates stable long-horizon predictions. Notably, \textit{SC} is able to achieve this level of performance while being \textit{orders-of-magnitude} faster.

Our experiments also show that \textit{SC} can be readily combined with Koopman operators to learn \textit{nonlinear} systems capable of stable and consistent predictions in mere \textit{seconds} (e.g., behavioral dynamics of dexterous manipulation skills for a multi-fingered robotic hand~\citep{han2023KODex}). Importantly, we show that \textit{SC} enables the learned system to generate robust and stable predictions even when the training data includes unsuccessful or truncated demonstrations (commonly characteristics of real-world robot learning datasets~\citep{qin2022dexmv, shaw2023videodex, bahl2022human}).
In contrast, we find unconstrained learning yields unstable LDS leading to erratic and ineffective robot behaviors in the presence of such challenging characteristics.


In summary, our core contributions include: i) a simple post-hoc procedure to ensure stability of learned linear dynamical systems, ii) evidence for its surprising ability to simultaneously offer reliable predictions, high computational efficiency, and stability guarantees, iii) practical and theoretical groundings of its underlying functionality tunable trade-off between expressivity and stability, iv) an extension leveraging Koopman operators to benefit nonlinear systems, and v) enabling robustness against unsuccessful and truncated demonstrations.


\section{Background: Learning Stable Linear Dynamical Systems}
\label{linear_stable}

Consider a stable but unknown discrete-time autonomous linear dynamical system $\mathrm{x}_{t+1} = A \mathrm{x}_{t}$,
where $\mathrm{x}_k \in \mathcal{X} \subseteq \mathbb{R}^n$ is the system state at time step $k \in \mathbb{N}$, and $A \in \mathcal{S}_{A}$ is the unknown system matrix, and $\mathcal{S}_{A} \subset \mathbb{R}^{n \times n}$ is the subspace that contains Schur stable matrices: $\mathcal{S}_{A} \triangleq \{A \in \mathbb{R}^{n \times n}| \max \{|\lambda_i(A)|\}^n_{i=1} < 1\}$, where $\{\lambda_i(A)\}^n_{i=1}$ are the eigenvalues of the matrix $A$, and $\vert \lambda \vert $ is the $|\lambda| \coloneqq \sqrt{\text{Re}(\lambda)^2 + \text{Im}(\lambda)^2}$.
Suppose we have a dataset $D_A = \{[\mathrm{x}_1^{(j)}, \mathrm{x}_2^{(j)}, \cdots, \mathrm{x}_T^{(j)}]\}_{j=1}^{N_A}$ containing $N_A$ trajectories of the stable LDS. We can now define two data matrices: $X^{(j)} = [\mathrm{x}_1^{(j)}, \mathrm{x}_2^{(j)}, \cdots, \mathrm{x}_{T-1}^{(j)}] \in \mathbb{R}^{n \times (T-1)}$ and $Y^{(j)} = [\mathrm{x}_2^{(j)}, \mathrm{x}_3^{(j)}, \cdots, \mathrm{x}_T^{(j)}] \in \mathbb{R}^{n \times (T-1)}$.

\noindent\textbf{Unconstrained Learning}: In its simplest form, the problem of learning an LDS from $D$ reduces to solving a standard least-squares problem.~\citep{montgomery2021introduction, seber2012linear}:
\setlength{\abovedisplayskip}{2pt}
\setlength{\belowdisplayskip}{2pt}
\begin{equation}
\label{eqn:data_driven}
\hat{A}_{LS} = \arg\min_{\Theta} \sum_{j=1}^{N_A} || Y^{(j)} - \Theta X^{(j)}||_F^2
\end{equation}
where $\Theta\in \mathbb{R}^{n \times n}$ are the learnable parameters and $||\cdot||_F$ is the matrix Frobenius norm. 
However, the optimization in Eq.~(\ref{eqn:data_driven}) does not impose any constraints on the matrix $A$, resulting in the learning of potentially unstable systems despite our knowledge that the original system is stable.


\noindent\textbf{Stability-Constrained Learning}: The prevailing strategy to ensure that the learned LDS is stable when learning from a dataset $D$ involves rewriting the optimization problem Eq.~(\ref{eqn:data_driven}) to explicitly include stability constraints:
\begin{equation}
\label{eqn:data_driven_stable}
\hat{A}_S = \arg\min_{\Theta} \sum_{j=1}^{N_A}  || Y^{(j)} - \Theta X^{(j)}||_F^2,\ \ s.t.\ \Theta \in \mathcal{S}_{A}
\end{equation}
While solving the optimization problem in Eq. (\ref{eqn:data_driven_stable}) can yield the constrained optimal solution which is guaranteed to be stable, it is known to be non-convex and requires approximations~\citep{boyd2004convex, mamakoukas2020memory}. 

\section{Post-Hoc Stability via Spectral Clipping}
\label{sec:clipping}

We investigate a simple post-hoc procedure, dubber ``Spectral Clipping (SC)", that surgically manipulates the spectrum of the system matrix $\hat{A}_{LS}$ in order to learn a stable LDS. Our motivation is to study computationally-cheap alternatives to standard approaches that explicitly solve a complex constrained optimization problem, such as the one in Eq.~(\ref{eqn:data_driven_stable}).


\begin{figure*}[t]
    \centering
\includegraphics[width=0.8\textwidth]{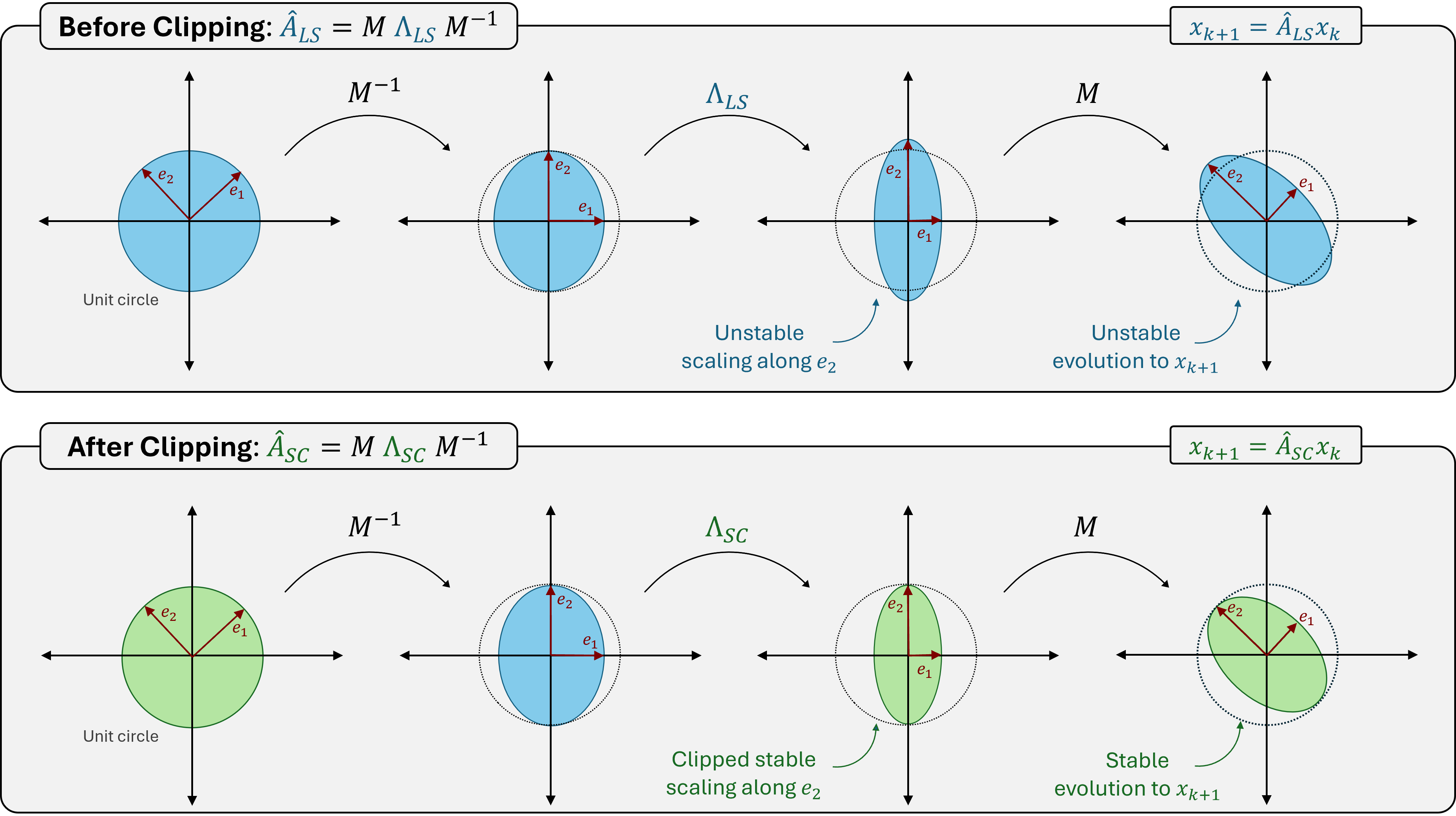}
        \caption{Geometric intuition: Spectral clipping enables stable evolution by clipping the spectral radius of the system matrix without altering its eigenvectors.}
        \label{fig:Geometric_interpretation}
\end{figure*}

\vspace{2pt}
\noindent\textbf{Algorithm}: Spectral Clipping (SC) involves the following steps:

\noindent\textit{Step 0. Unconstrained LS}: Estimate $\hat{A}_{LS}$ as the system matrix obtained from data by analytically solving the unconstrained least-squares problem (Eq.~(\ref{eqn:data_driven})). 

\noindent\textit{Step 1. Decomposition}: 
Perform eigen decomposition of the least squares solution: $\hat{A}_{LS} = M \Lambda M^{-1}$,  
where $\Lambda$ is a diagonal matrix with all the eigenvalues of $\hat{A}_{LS}$ on its diagonal, and $M$ is the modal matrix with the corresponding eigenvectors. 

\noindent\textit{Step 2. Spectral Clipping}: Identify all \textit{unstable} eigenvalues of $\hat{A}_{LS}$ in $\Lambda$ and clip their magnitudes: 
\newline $\bar{\Lambda}_{ii} = \left\{\begin{array}{ll}
\frac{\Lambda_{ii}}{\vert\Lambda_{ii}\vert}(1-\varepsilon), & \vert\Lambda_{ii}\vert \geq 1\\
\Lambda_{ii}, & \vert\Lambda_{ii}\vert < 1
\end{array}\right.$, where $\varepsilon \geq 0$ is an arbitrarily-small constant.

\noindent\textit{Step 3. Recomposition}: Reconstruct the system matrix based on the new $\bar{\Lambda}$ and the unaltered modal matrix $M$ to produce \textit{SC}'s estimate of the system matrix: $\hat{A}_{SC}\triangleq M \bar{\Lambda} M^{-1}$.

\noindent\textbf{Stability}: 
The reconstructed linear system after the above clipping procedure, $x_{t+1}=\hat{A}_{SC}\ x_t$ can be trivially shown to be asymptotically (marginally) stable When $\varepsilon >0$ ($\varepsilon=0$) since the maximum eigenvalue of $\hat{A}$ is strictly less than (equal to) one by construction~\citep{robinson2012introduction}.

\noindent\textbf{Dealing with non-diagonalizable Matrices}: A potential concern with the above approach is that $\hat{A}_{LS}$ may be non-diagonalizable. In practice, we can add tiny perturbations to the original $\hat{A}_{LS}$ to ensure a valid eigen-decomposition, which the built-in solvers in NumPy and MATLAB also perform. This is justified by the fact that a diagonalizable matrix can always be found arbitrarily close to any non-diagonalizable one (see Lemma.~\ref{lemma_dense} in Appendix.~\ref{sec:proof}). We also leverage this fact to show that the evolution of the system with a non-diagonalizable matrix and that of the system with the arbitrarily-close diagonalizable matrix can be made arbitrarily close (see proof in Appendix.~\ref{sec:proof}). As such, the \textit{SC} method can be applied to the arbitrarily-close diagonalizable matrix $\tilde{A}_{LS}$ to enforce stability of systems with non-diagonalizable matrices without significant approximation errors.



\noindent\textbf{Prediction accuracy}: Indeed, it is natural to suspect that this seemingly-naive approach would result in significantly poorer, albeit stable, approximations that produce worse predictions. However, as our experiments demonstrate, \textit{SC} surprisingly leads to similar accuracy compared to existing constrained optimization techniques (see Sec.~\ref{Experiments} for experiments across a wide range of domains).

\noindent\textbf{Expressivity-Stability tradeoff}: We note that the choice of $\varepsilon$ above represents an important tradeoff between the system's expressivity and stability. In particular, choosing $\varepsilon=0$ will yield \textit{marginally}-stable but highly-expressive systems. In contrast, choosing $\varepsilon>0$ will ensure \textit{strictly}-stable systems, with increasing values of $\varepsilon$ yielding faster exponential convergence and diminishing expressivity (see Appendix.~\ref{appendix:stability-expresivity tradeoff} for experimental validation).

\noindent\textbf{Intuition}: While \textit{SC}'s effectiveness may seem puzzling at surface, a deeper analysis offers some intuition. By altering only the eigenvalues while preserving eigenvectors, \textit{SC} essentially impacts the ``speeds'' or ``rates'' of the system's dynamics modes without qualitatively altering them. Specifically, \textit{SC} likely preserves the geometric structure of the state space as the invariant subspace spanned by the eigenvectors remains unchanged while modifying how trajectories behave within them. Since eigenvalues only determine whether trajectories grow, decay, or oscillate, altering their magnitudes shifts the stability margin without altering the system's directions of approach to divergence. Further, \textit{SC} intervenes only when necessary to ensure stability. By only altering eigenvalues that are larger than one, \textit{SC} reduces the rate of energy growth along specific paths (dictated by corresponding eigenvectors), which would have pushed the system towards instability without clipping. See Fig.~\ref{fig:Geometric_interpretation} for this geometric intuition. 

\noindent\textbf{Extensions}: We also extend \textit{SC} to settings beyond learning stable LDS. Specifically, we discuss both learning \textbf{nonlinear systems} (see Sec.~\ref{sec:Koopman_clipping}) and \textbf{systems with control inputs} (see Appendix.~\ref{sec:SC_control}).

\section{A Koopman-based Adaptation of SC to Nonlinear Systems}
\label{sec:Koopman_clipping}

We also investigate the potential utility of \textit{SC} when learning nonlinear dynamical systems (NLDS) by combining it with Koopman operator-based techniques~\citep{Koopman1931Koopman,mezic2020koopman}.

Let $\xi_{k+1}=f(\xi_k)$ be an unknown discrete-time NLDS with $\xi_k \in \Xi \subseteq \mathbb{R}^n, \forall k\in\mathbb{N}$ and $D_f = \{[\xi_1^{(j)}, \xi_2^{(j)}, \cdots, \xi_T^{(j)}]\}_{j=1}^{N_f}$ be a dataset containing $N_f$ trajectories of the NLDS. Koopman operator-based learning approaches learn to approximately encode the NLDS as $\mathrm{z}_{k+1}=\hat{K}\ \mathrm{z}_k$, where $\hat{K} \in \mathbb{R}^{m \times m}$ is the learned finite-dimensional approximation of the Koopman operator. $\mathrm{z}_k = \phi(\xi_k) \in \mathbb{R}^m$ is the ``lifted" latent state produced by the so-called \textit{``lifting" function} $\phi:\Xi \rightarrow \mathcal{Z}$ with  $m \gg n$, and $\mathcal{Z}$ is the approximated Koopman-invariant subspace in which the dynamics appear linear~\citep{Koopman1931Koopman,mezic2020koopman}. 

While it is possible to learn or design an encoder and a decoder to lift and reconstruct the original state, other formulations allow for trivial decoding and enable further analysis. In particular, we investigate \textit{state-inclusive latent space} designs~\citep{proctor2018generalizing,han2023KODex,johnson2025heterogeneous} which append the original state to the nonlinear lifted embedding (i.e., $\mathrm{z_t}=[\phi(\xi_t)^T,\xi_t^T]^T$ with $\mathrm{z}_{k+1}=\hat{K}\ \mathrm{z}_k$). As such, a trivial linear map can recover the original states from the latent states.


\noindent\textbf{Spectral clipping of the Koopman matrix}: Given a learned lifted latent LDS ($\mathrm{z}_{k+1}=\hat{K}\ \mathrm{z}_k$), we can perform spectral clipping of the learned Koopman matrix $\hat{K}$ following the same procedure described in Sec.~\ref{sec:clipping}. This process will yield a Schur-stable Koopman matrix $\hat{K}_{SC}$ with eigenvalues whose magnitudes are guaranteed to lie within the unit circle.

\noindent\textbf{Open-loop predictive stability}
Clipping ensures that the learned latent linear predictor $z_{t+1}=K_{\mathrm{SC}}z_t$ satisfies
$\rho(K_{\mathrm{SC}})\le 1-\epsilon$ for some $\epsilon\ge 0$.
If $\epsilon>0$, then $K_{\mathrm{SC}}^t z_0\to 0$ as $t\to\infty$ for all $z_0$, so the latent
predictions remain bounded and decay to the origin.
If $\epsilon=0$, then $\rho(K_{\mathrm{SC}})\le 1$ implies boundedness only under additional
conditions (e.g., no defective Jordan blocks on the unit circle), and in general one can only
guarantee that the latent predictor is not exponentially unstable.
Since we employ a state-inclusive lifting (i.e., the original state appears as a subvector of $z$),
the decoded open-loop predictions inherit the same boundedness/decay properties as the latent
trajectory.
We emphasize that this is a statement about the learned open-loop predictor, and does not imply
closed-loop stability of the underlying nonlinear dynamical system without further assumptions.

\section{Empirical Analyses}
\label{Experiments}

We evaluated \textit{SC} in terms of computation efficiency, stability, and expressivity when learning LDS from data and compare its performance against existing methods. Note that for all experiments, we use $\epsilon =0$ unless indicated otherwise (see Appendix.~\ref{appendix:stability-expresivity tradeoff} for the effects of different $\varepsilon$ values).
\subsection{Experimental Design}

\noindent\textbf{Compute}:
\label{sec:hardware}
We conducted all experiments on Alienware Aurora R13, with 10-core 12th Gen Intel Core i7-12700KF 1.0-GHz CPU with 32G RAM.

\noindent\textbf{Baselines}: We compared \textit{SC} against the following baselines: \textit{i)} \textit{LS}: vanilla least squares, \textit{ii)} (\textit{CG}): constraint generation~\citep{boots2007constraint}, \textit{iii)} \textit{WLS}: weighted least squares~\citep{huang2016learning}, and \textit{iv)} \textit{SOC}: a characterization approach~\citep{mamakoukas2020memory, mamakoukas2023learning}. 

\noindent\textbf{Datasets}: We evaluated all approaches on the following datasets.

\noindent\textit{i) UCSD}~\citep{chan2005probabilistic}: This dataset contains 254 highway traffic videos, each consisting of 48–52 frames of size $48\times48$. Following \citet{mamakoukas2020memory}. we apply SVD to each sequence and retain subspaces of dimension $r \in \{3, 30\}$ and reduce dimensionality. 

\noindent\textit{ii) DTDB}~\citep{hadji2018new}: A dataset of 285 dynamic texture movements, such as turbulence, wavy motions, etc. Each video consists of varying number of frames. As we did for the UCSD dataset, we leveraged SVD to lower the state dimension to $r = 300$. 

\noindent\textit{iii) DexManip}~\citep{han2023KODex, Rajeswaran2018DAPG}: A robotic dexterous manipulation dataset containing demonstrations of a 23-DoF multi-fingered Adroit hand performing four complex tasks in MUJOCO~\citep{Todorov2012MUJuCO}. 

\noindent\textit{iv) Franka Panda}~\citep{gaz2019dynamic}: A dataset of Franka Panda Robot Arm's movements as a consequence of different control inputs. Results are reported in Appendix~\ref{sec:control}.

\subsection{\textit{SC} significantly reduces computation \& memory burden}
\label{sec:time_memory}
\begin{wraptable}[6]{r}{0.5\textwidth} 
\vspace{-5pt} 
\centering
\resizebox{0.5\textwidth}{!}
{
\begin{tabular}{@{}lcccccc@{}}
\toprule
 & \textit{LS} & \textit{CG} & \textit{WLS} & \textit{SOC} & \textit{SC}
 \\ \toprule
Time complexity & $-$ & $O(n^6)$ & $O(n^6)$ & $O(n^3)$ & $O(n^3)$\\
Space complexity & $O(n^2)$ & $O(n^4)$ & $O(n^4)$ & $O(n^2)$ & $O(n^2)$\\
 \bottomrule
\end{tabular}
}
\captionsetup{skip=2pt}
\caption{Space and Time complexity of each method with $n$ being the system dimension.}
\label{tab_space_complexity}
\end{wraptable}
We begin by comparing the computational efficiency of each approach.
We report the average computation time across the 254 input sequences of the \textit{UCSD} dataset at each dimension in Fig.~\ref{fig:ucsd_dtdb} (a). The results show that 
\textit{SC} is significantly more time-efficient than \textit{WLS}, \textit{CG}, and \textit{SOC}. In Fig.~\ref{fig:ucsd_dtdb} (b), we report the results on the \textit{DTDB} dataset.
These results also indicate that the computation time of \textit{SC} is significantly and consistently shorter than that of \textit{SOC}, showing that \textit{SC} is more scalable w.r.t. the input dimensions. 
Similarly, the memory usage results reported in Fig.~\ref{fig:ucsd_dtdb} (c) and Fig.~\ref{fig:ucsd_dtdb} (d) suggest that \textit{SC} requires significantly less memory compared to \textit{CG}, \textit{WLS}, and \textit{SOC}. The improved efficiency is a result of \textit{SC} only requiring eigen decomposition and simple algebraic calculations,
while others rely on iterative optimization. Though \textit{LS} is the naturally the most time-efficient, it does not guarantee stability.
\begin{figure*}[tb]
    \centering
\includegraphics[width=\textwidth]{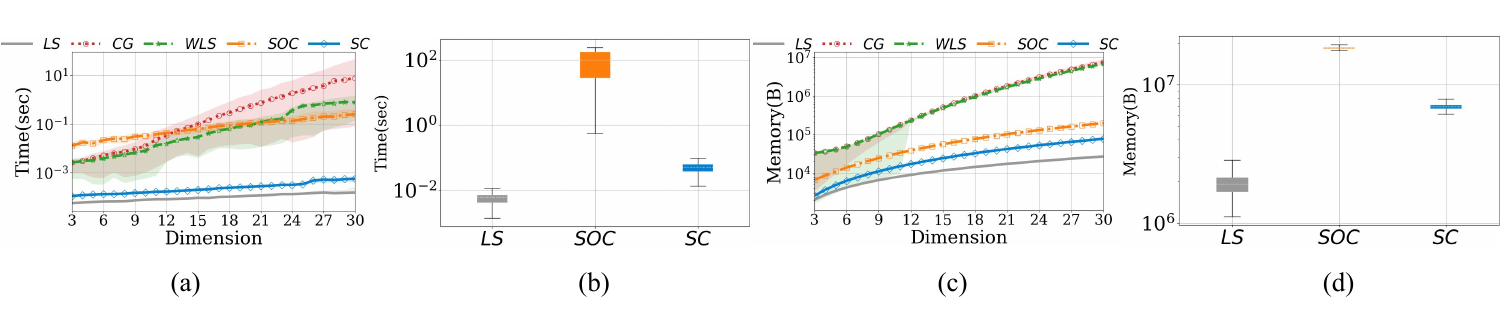}
        \captionsetup{skip=1pt}
        \caption{(a): Computation time on \textit{UCSD} dataset (solid lines indicate the median); (b): The boxplot of the average computation time on \textit{DTDB} dataset; (c): Memory usage on \textit{UCSD} dataset (solid lines indicate the median); (d): The boxplot of the average memory usage on \textit{DTDB} dataset.}
        \label{fig:ucsd_dtdb}
\end{figure*}

In addition to empirical results on the two datasets, we report the time and space complexity of each method in Table.~\ref{tab_space_complexity}. For time complexity, since all methods modify the original \textit{LS} solution to enforce stability, we compare only the cost incurred after solving \textit{LS}. For iterative methods (\textit{CG}, \textit{WLS}, and \textit{SOC}), the shown time complexity corresponds to a single iteration. In contrast, \textit{SC} is non-iterative, and its overall time complexity is shown. These results further suggest that \textit{SC}'s time and memory efficiency is comparable or better than those of the baselines. 

\subsection{\textit{SC} does not sacrifice predictive accuracy}

\begin{wrapfigure}[11]{r}{0.5\textwidth} 
\vspace{-8pt} 
\centering
    \begin{minipage}{0.5\textwidth}
    \centering
        \includegraphics[width=\textwidth]{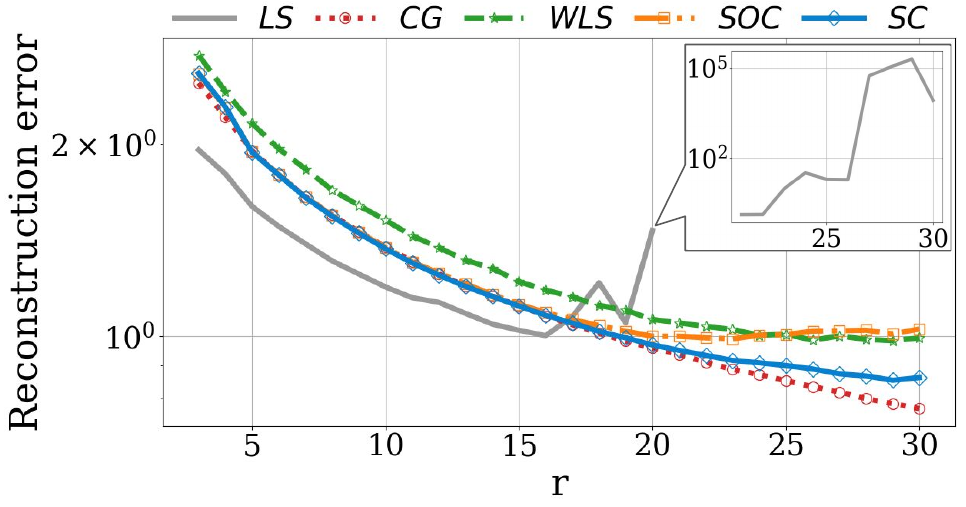}
        \captionsetup{skip=2pt}
        \caption{\small{Average reconstruction error evaluated on different subspace dimensions of the \textit{UCSD} dataset.}}
        \label{fig:error_ucsd_dim}
    \end{minipage}
\vspace{-20pt} 
\end{wrapfigure}
To evaluate if SC's improved efficiency comes at the expense of predictive accuracy, we compared the prediction accuracy of LDSs produced by each approach.
We first present the average reconstruction error across all states at each time step for the \textit{UCSD} dataset in Fig.~\ref{fig:ucsd_dtdb_error} (left). For illustrative purposes, we select $r=3, 20$ and $30$ as examples. We find that as the input dimension increases (e.g., $r=30$), \textit{LS} fails to reconstruct the input states, with the error increasing significantly over time. A similar trend can be observed in Fig.~\ref{fig:error_ucsd_dim}, where we show the average reconstruction error over all states and time steps of each method at each training dimension. In both cases, \textit{SC} consistently achieves results comparable to \textit{CG}, \textit{WLS}, and \textit{SOC}, while being orders of magnitude faster for computation and highly efficient in memory usage (as discussed in Section.~\ref{sec:time_memory}).


In Fig.~\ref{fig:ucsd_dtdb_error} (right), we report how prediction error varies across time for the \textit{DTDB} dataset as length of the input sequence given to the model before predictions. Note that due to the high dimension of the state space in this dataset, only \textit{LS}, \textit{SOC}, and \textit{SC} are applicable. Both \textit{CG} and \textit{WLS} involve solving a Kronecker product, which has a significantly large space complexity leading to memory overflow (i.e., $O(n^4)$ as shown in Table.~\ref{tab_space_complexity}, $n$ is the size of the state space).

The results indicate that \textit{SC} generates marginally more accurate predictions than \textit{SOC}, and its error grows slower as the prediction horizon increases. While \textit{LS} achieves the lowest error, it does not guarantee stability, a crucial property for reliable long-horizon predictions (see Section.~\ref{sec:long_term}). 

\begin{figure*}[!t]
    \begin{minipage}{0.5\textwidth}
    \centering
        \includegraphics[width=\textwidth]{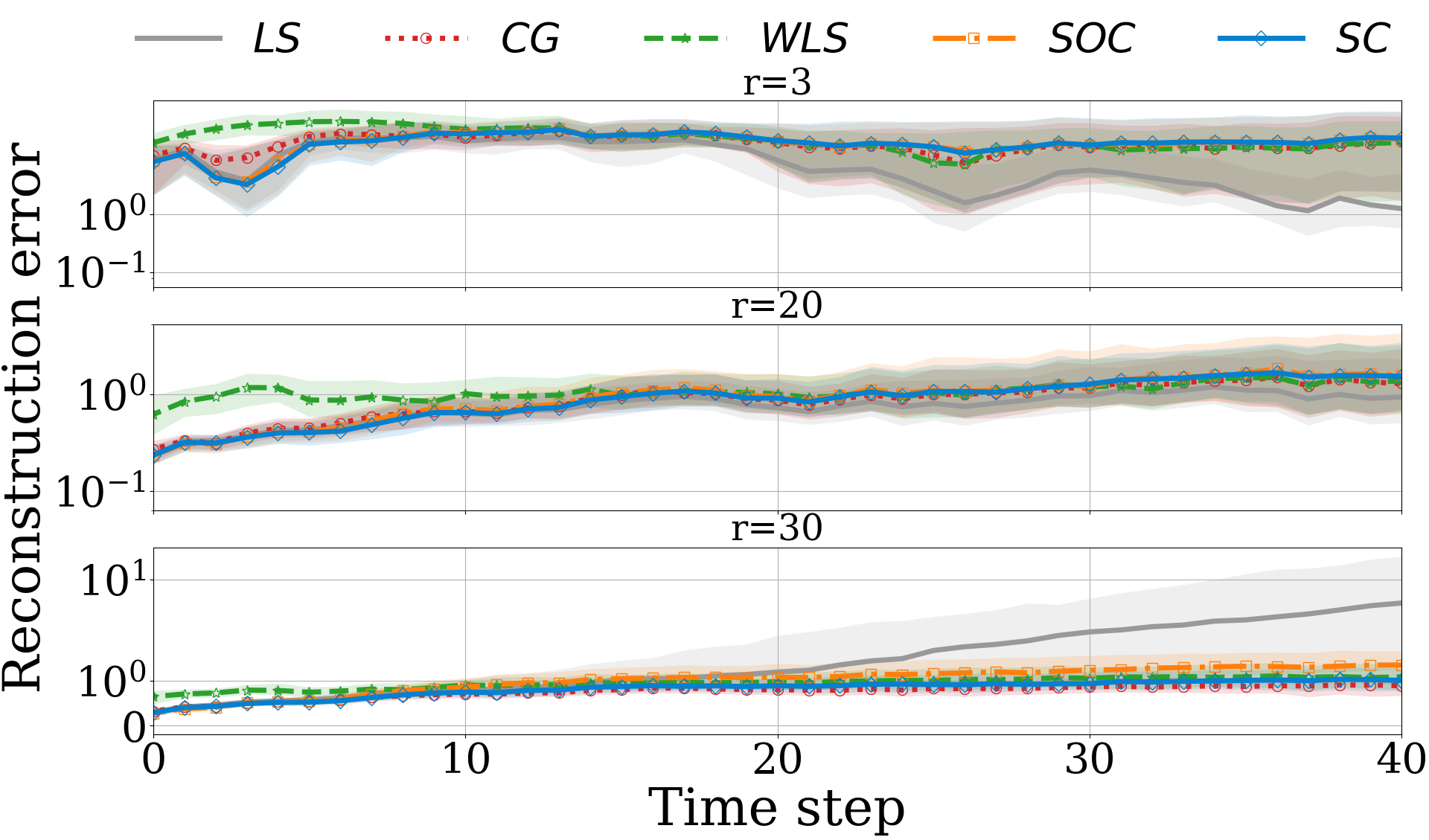}
    \end{minipage}
    \begin{minipage}{0.5\textwidth}
    \centering
        \includegraphics[width=\textwidth]{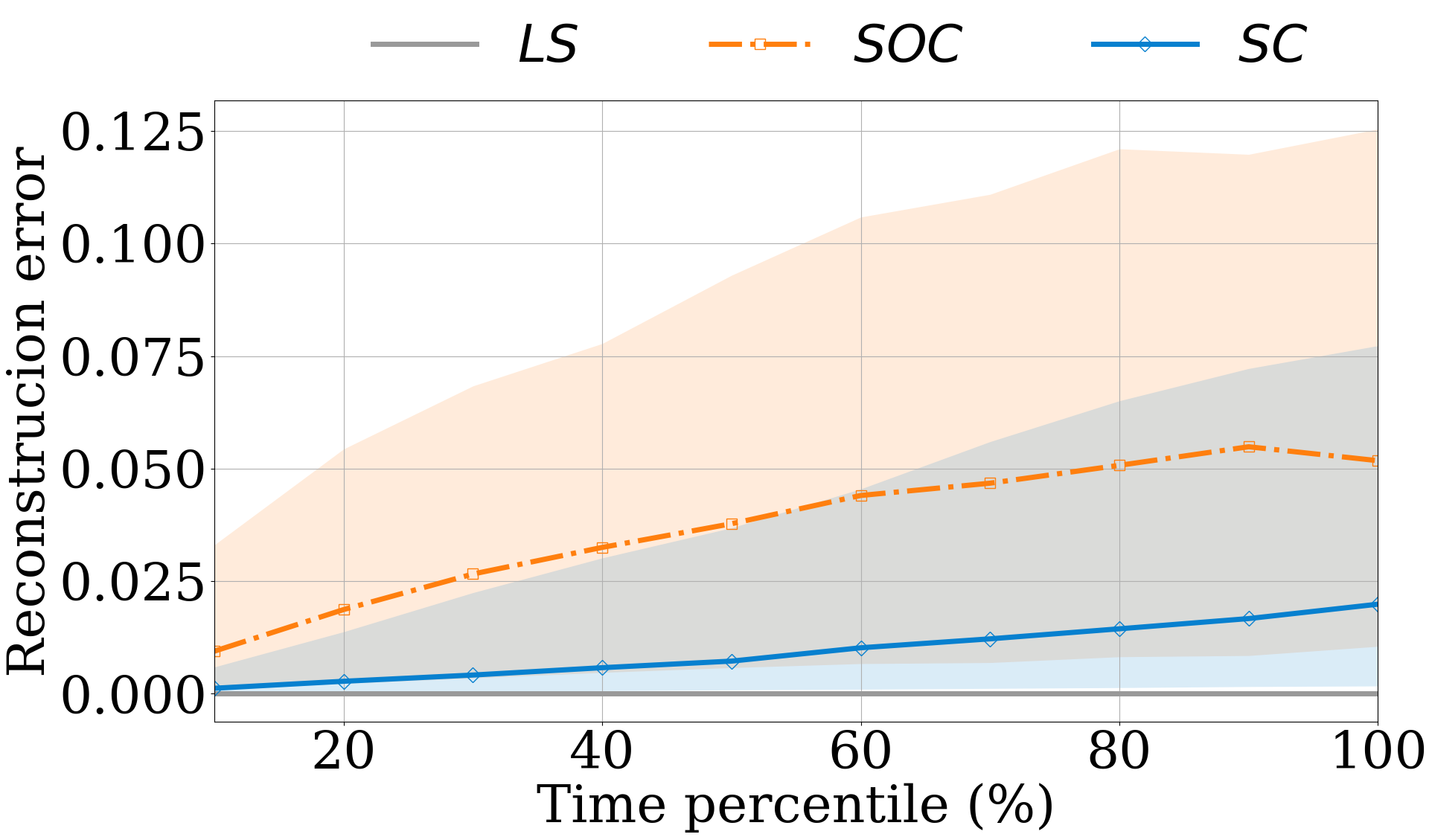}
    \end{minipage}
    \captionsetup{skip=2pt}
    \caption{Left figure: average reconstruction error evaluated on different subspace dimensions of the \textit{UCSD} dataset. Shaded areas indicate quartile values over all sequences. Right figure: average reconstruction error evaluated at each time step percentile of the \textit{DTDB} dataset. Shaded areas indicate quartile values over all sequences.}
    \vspace{-1.4em}
    \label{fig:ucsd_dtdb_error}
\end{figure*}

\vspace{-0.3cm}
\subsection{\textit{SC} enables consistent long-horizon predictions}
\label{sec:long_term}
We evaluated that \textit{SC} had the ability to make sensible long-horizon predictions that extend beyond the time-horizon of the training data.
To verify this, we extend the rollout horizon to both $H = 500$ and $H = 3000$ time steps for the 285 LDSs learned from the \textit{DTDB} dataset. Ideally, the learned system would be able to repeat the dynamical pattern learned from data, avoid both instability and convergence (black/static pixels) despite the extended rollout horizon. 

\begin{wraptable}[8]{r}{0.65\textwidth} 
\vspace{-5pt} 
\centering
\resizebox{0.65\textwidth}{!}{
\begin{tabular}{@{}lccc@{}}
\toprule
 & \textit{Avg. NIQE ($H=500$) ($\downarrow$)} & \textit{Avg. NIQE ($H=3000$) ($\downarrow$)} &  \textit{Moving Ratio ($\uparrow$)} \\ \toprule
\textit{\LS{LS}} & \LS{904.6265}  & \LS{6355.5817} & \LS{62.11\%}\\
\textit{SOC} & 832.8063  & 4243.4279 & 22.11\%\\
\textit{SC ($\varepsilon=0$)} & \textbf{18.8769} & \textbf{1075.0140} & \textbf{54.39\%}\\ 
\textit{SC($\varepsilon=10^{-5}$)} & \textbf{18.8769} & \textbf{1075.0139} & \textbf{54.04}\%\\ 
\textit{SC($\varepsilon=10^{-2}$)} & 706.0687 & 10000.0 & 0\% \\ 
\midrule
\addlinespace[2pt] 
\textit{Training Set (Avg.)} & \multicolumn{2}{c}{18.8768}  & 100.0\% \\
\bottomrule
\end{tabular}
}
\captionsetup{skip=2pt}
\caption{Long-horizon prediction metrics on the \textit{DTDB} dataset.}
\label{tab_magnitued}
\end{wraptable}
To quantify long-term predictive performance, we use the following metrics:
i) \textit{NIQE quality score~\citep{mittal2012making})}: A common image quality metric assessing the meaningfulness of LDS predictions. Note that some poor predictions (e.g., when most pixel values blow up or converge to 0) can produce unbounded NIQE scores, so we substitute them with 10000; ii) \textit{Moving Ratio~\footnote{\textit{Moving} is defined as $||F_{M}-F_{M-2}||_1>T$, where $\{F_{i}\}_{i=1}^{M}$ is a sequence of $M$ image frames and $T=9$.}}: The ratio of sequences whose final frames still exhibit the dynamic pattern, indicating whether LDS evolution preserves it.

We report the above metrics for the 285 sequences from the \textit{DTDB} dataset in Table.~\ref{tab_magnitued}. 
The results show that \textit{SC} best preserves long-term dynamic patterns while maintaining high prediction accuracy, particularly at 500-step predictions, which are nearly comparable to the training set images. This is likely since \textit{SC} can learn \textit{marginally-stable} systems ($\varepsilon = 0$) or stable systems with slow energy decay ($\varepsilon \rightarrow 0^+$), while \textit{LS} tends to produce unstable long-horizon predictions, and \textit{SOC} is bound to learn strictly stable systems that might be overly conservative and produce vanishing values. Refer to sample frame predictions in Fig.~\ref{fig:qual_dtdb_sc} of  Appendix~\ref{appendix:stability-expresivity tradeoff}. We also provide additional experiments and insights into \textit{SC}'s stability-expressivity tradeoff introduced by the choice of $\varepsilon$ in Appendix~\ref{appendix:stability-expresivity tradeoff}.

\subsection{\textit{SC}'s benefits extend to nonlinear systems when combined with the Koopman Operator}
\label{sec:kodex_performance}

We evaluated the effectiveness of combining the Koopman Operator theory~\citep{Koopman1931Koopman}, \textit{SC} can be effectively and efficiently applied to highly nonlinear robot manipulation tasks by evaluating SC on \textit{DexManip}~\citep{Rajeswaran2018DAPG}, a robotic dexterous manipulation dataset containing demonstrations of four complex tasks using a 23-DoF multi-fingered hand. Though the original state evolution is nonlinear, it was recently demonstrated that the evolution can be encoded using a linear system when ``lifted" to a higher-dimensional space (e.g., $r \approx 750$) that approximates the Koopman invariant subspace~\cite{han2023KODex}. Four tasks are briefly described as follows: i) \textit{Pen Reorientation}: The robot hand is tasked with orienting a pen to a random target rotation angles, ii) \textit{Door Opening}: The robot hand is tasked with opening a door from a random starting position, iii) \textit{Tool Use}: The robot hand is tasked with picking up the hammer and driving the nail into the board placed at a random height, and iv) \textit{Object Relocation}: The robot hand is tasked with moving a cylinder to a random target location (see Appendix.~\ref{sec:kodex} for the details of each dataset). Similar to the \textit{DTDB} dataset, only \textit{LS}, \textit{SOC}, and \textit{SC} are applicable due to the high dimensionality of the lifting space.
\begin{table}[h!]
    \centering
   \resizebox{0.9\linewidth}{!}
   {
    \begin{tabular}{l|ccccccccccccc}
        \toprule
        \multirow{2}{*}{} & 
        \multicolumn{3}{c}{\textbf{Computation Time} (sec, $\downarrow$)} & 
        \multicolumn{3}{c}{\textbf{Memory Usage} (MB, $\downarrow$)} & 
        \multicolumn{3}{c}{\textbf{Success Rate} (\%, $\uparrow$)} & 
        \multicolumn{3}{c}{\textbf{Safety Rate} (\%, $\uparrow$}) \\
         \textbf{Datasets} & \textit{\LS{LS}} & \textit{SOC} & \textit{SC}
        & \textit{\LS{LS}} & \textit{SOC} & \textit{SC}
        & \textit{\LS{LS}} & \textit{SOC} & \textit{SC}
        & \textit{\LS{LS}} & \textit{SOC} & \textit{SC} \\
        \midrule
        \textit{Reorient} & \LS{0.3} & 7308.9 & \textbf{0.4} & \LS{122.3} & 184.6 & \textbf{125.0} & \LS{69.5} & 52.7 & \textbf{75.6} & \LS{100.0} & \textbf{100.0} & \textbf{100.0} \\
        \textit{Door} & \LS{0.3} & 6.3 & \textbf{0.4} & \LS{96.2} & 151.1 & \textbf{98.6} & \LS{75.3} & 0.0 & \textbf{76.6} & \LS{100.0} & \textbf{100.0} & \textbf{100.0} \\
        \textit{Tool} & \LS{0.6} & 192.8 & \textbf{0.8} & \LS{224.6} & 331.7 & \textbf{257.2} & \LS{55.5} & 0.0 & \textbf{94.0} & \LS{0.0} & \textbf{100.0} & \textbf{100.0} \\
        \textit{Reloc} & \LS{0.7} & 6608.1 & \textbf{0.8} & \LS{215.3} & 321.3 & \textbf{247.5} & \LS{96.0} & 48.6 & \textbf{93.8} & \LS{100.0} & \textbf{100.0} & \textbf{100.0} \\
        \bottomrule
    \end{tabular}
    }
\captionsetup{skip=2pt}
\caption{\textit{SC} not only alleviates the computational and memory burden on the \textit{DexManip} datasets, but also enables successful and safe robot manipulation.}
\label{table:kodex_task_all_results}
\end{table}


We first report the computation time and memory usage in Table.~\ref{table:kodex_task_all_results}. As seen, \textit{SC} also alleviates the computational and memory burden compared to the \textit{SOC} method. Note that the values for LS method are marked in gray as this method does not guarantee stability. We then show the average reconstruction error on the \textit{DexManip} datasets in Fig.~\ref{fig:kodex_all_results}, where \textit{SC} demonstrates the best performance and \textit{LS} results in significantly larger errors on the \textit{Tool} dataset due to its lack of stability.

\begin{figure*}[!thb]
    \centering
    \includegraphics[width=0.7\textwidth]{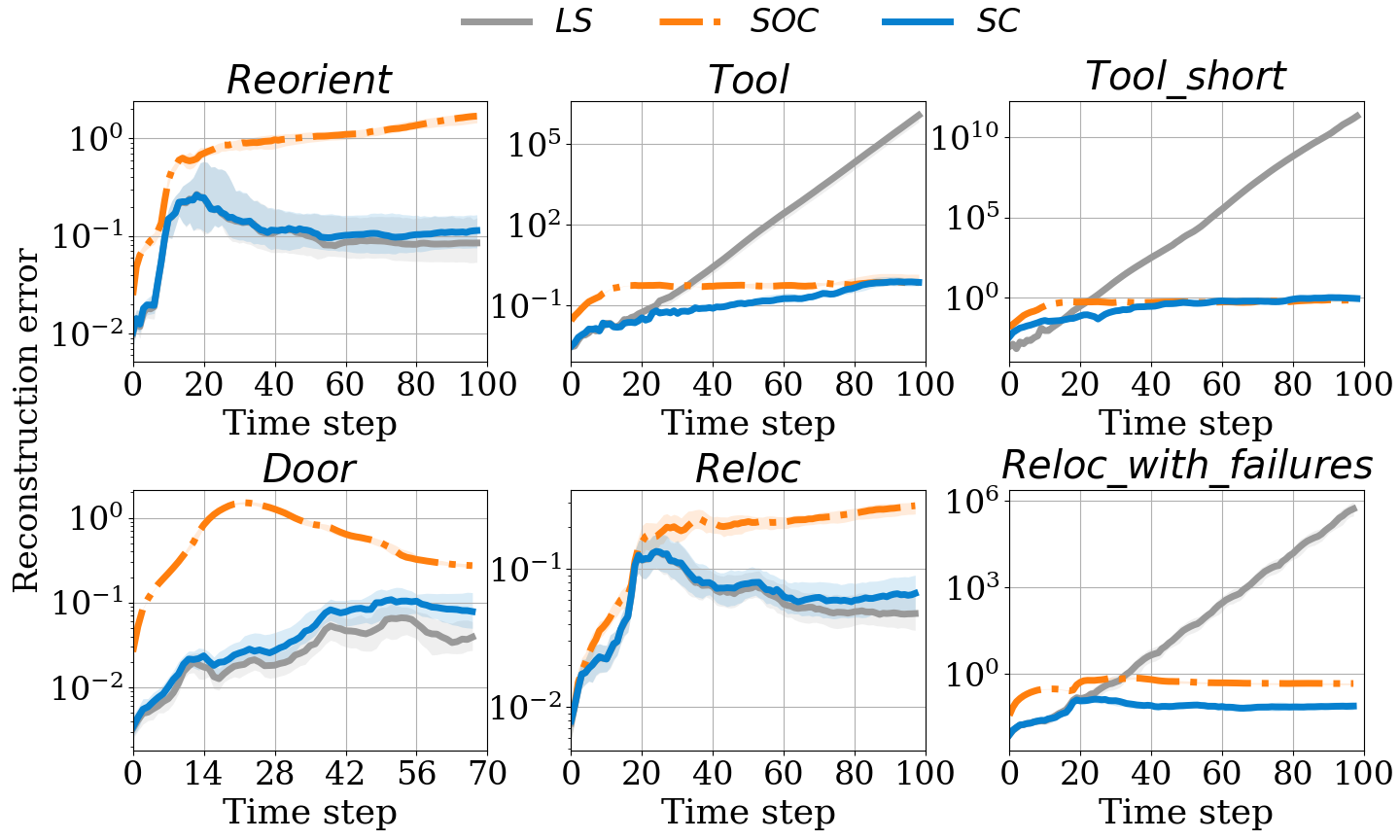}
    \captionsetup{skip=2pt}
    \caption{Average reconstruction error on the  \textit{Reorient}, \textit{Door}, \textit{Tool}, \textit{Reloc}, \textit{Tool\_short}, and \textit{Reloc\_with\_failures} datasets. Solid lines represent mean values, and shaded areas indicate quartile values over the rollout trajectories. }
    \vspace{-0.7em}
    \label{fig:kodex_all_results}
\end{figure*}

To further underscore the importance of ensuring stability when learning LDSs for robotic dexterous manipulation tasks in the \textit{DexManip} datasets, we also compute the task success rates and the safety rates (both reported in Table.~\ref{table:kodex_task_all_results}) for each task when tracking the generated robot hand trajectories. The definition of these two metrics are included in Appendix.~\ref{sec:kodex}.


For both tasks, \textit{SC} learns stable LDSs that achieve the highest task success rates and complete the tasks safely. In contrast, \textit{SOC} underperforms due to the fact it learns overly conservative LDSs that lack the ability to generate diverse trajectories. Note that while \textit{LS} seems to achieve the desired performance on the \textit{Reloc} task, its effectiveness drops significantly when trained on a corrupted dataset as shown in Section.~\ref{sec:corrupted}.
\vspace{-0.3cm}
\subsection{\textit{SC} offers robustness to unsuccessful and shorter demonstrations}
\label{sec:corrupted}

In real-world settings, the collected demonstrations may be imperfect, unlike those in the original \textit{DexManip} datasets. Hence, we introduce two additional datasets: (i) \textit{Tool\_short}, where each demonstration is shorter, posing challenges in learning stable LDSs with a limited horizon, and (ii) \textit{Reloc\_with\_failures}, which includes a few failed demonstrations (see Appendix.~\ref{sec:kodex} for more details).
\begin{table}[h!]
    \centering
   \resizebox{\linewidth}{!}{
    \begin{tabular}{l|ccccccccccccc}
        \toprule
        \multirow{2}{*}{} & 
        \multicolumn{3}{c}{\textbf{Computation Time} (sec, $\downarrow$)} & 
        \multicolumn{3}{c}{\textbf{Memory Usage} (MB, $\downarrow$)} & 
        \multicolumn{3}{c}{\textbf{Success Rate} (\%, $\uparrow$)} & 
        \multicolumn{3}{c}{\textbf{Safety Rate} (\%, $\uparrow$}) \\
        \textbf{Datasets} & \textit{\LS{LS}} & \textit{SOC} & \textit{SC}
        & \textit{\LS{LS}} & \textit{SOC} & \textit{SC}
        & \textit{\LS{LS}} & \textit{SOC} & \textit{SC}
        & \textit{\LS{LS}} & \textit{SOC} & \textit{SC} \\
        \midrule
        \textit{Tool\_short} & \LS{0.3} & 3470.2 & 0.4 & \LS{80.2} & 280.5 & 112.8 & \LS{3.3} & 0.0 & 90.7 & \LS{0.0} & 100.0 & 100.0 \\
         \textit{Reloc\_with\_failures} & \LS{0.6} & 4564.6 & 0.9 & \LS{216.1} & 322.1 & 248.4 & \LS{9.6} & 0.0 & 94.4 & \LS{0.0} & 100.0 & 100.0\\
        \bottomrule
    \end{tabular}
    }
    \captionsetup{skip=2pt}
    \caption{\textit{SC} can efficiently learn stable manipulation skills from shorter and failed demonstrations.}
    \label{table:kodex_task_hard}
\end{table}


From Table.~\ref{table:kodex_task_hard}, and it is evident that \textit{SC} is more efficient than \textit{SOC} in terms of compute and memory. In \textit{Tool\_short} and \textit{Reloc\_with\_failures} of Fig.~\ref{fig:kodex_all_results}, we show the average reconstruction error over the original uncorrupted datasets. From these results, we can observe that 
the LDSs learned by the \textit{LS} approach tend to be highly unstable, as evidenced by the significantly increasing error over time. A similar conclusion can be drawn from the task success rates and safety rates (both reported in Table.~\ref{table:kodex_task_hard}), where \textit{SC} demonstrates significantly superior performance.
This indicates the importance of enforcing stability in learned LDSs to effectively handle corrupted demonstrations. 

We also reveal the connections between \textit{SC} and the Koopman Operator's spectral properties. Following the algorithm described in~\cite{rowley2009spectral}, we can rollout the systems with only subsets of the Koopman modes 
and visualize the hand motions to investigate the effectiveness of \textit{SC} upon each subset. See detailed experiments in Appendix.~\ref{sec:koopman_modes}, where we demonstrated how \textit{SC} effectively stabilizes unstable modes and keeps stable modes. 

\section{Conclusions}
\label{sec:conclusion}
Spectral clipping (SC) is a simple yet surprisingly-effective post-hoc method for ensuring the stability of learned linear time-invariant dynamical systems through extensive experiments across multiple domains. Further, SC can be combined with Koopman-based methods to learn nonlinear dynamical systems and ensure open-loop predictive stability. Key benefits of SC include i) significant gains in computation efficiency without sacrificing prediction accuracy, ii) stable long-term predictions, and iii) robustness to datasets with failed or shorter demonstrations. 
A key limitation of this work is that we limit our investigation to deterministic systems. Extending these experiments to stochastic systems and automating the choice of the stability margin $\varepsilon$
are promising avenues for future work. 
\newpage
\bibliography{references}

@article{johnson2025heterogeneous,
  title={Heterogeneous Mixtures of Dictionary Functions to Approximate Subspace Invariance in Koopman Operators: Why Deep Koopman Operators Works},
  author={Johnson, Charles A and Balakrishnan, Shara and Yeung, Enoch},
  journal={Journal of Nonlinear Science},
  volume={35},
  number={4},
  pages={68},
  year={2025},
  publisher={Springer}
}

@article{proctor2018generalizing,
  title={Generalizing Koopman theory to allow for inputs and control},
  author={Proctor, Joshua L and Brunton, Steven L and Kutz, J Nathan},
  journal={SIAM Journal on Applied Dynamical Systems},
  volume={17},
  number={1},
  pages={909--930},
  year={2018},
  publisher={SIAM}
}

@inproceedings{chan2005probabilistic,
  title={Probabilistic kernels for the classification of auto-regressive visual processes},
  author={Chan, Antoni B and Vasconcelos, Nuno},
  booktitle={2005 IEEE Computer Society Conference on Computer Vision and Pattern Recognition (CVPR'05)},
  volume={1},
  pages={846--851},
  year={2005},
  organization={IEEE}
}

@article{kirichenko2022last,
  title={Last layer re-training is sufficient for robustness to spurious correlations},
  author={Kirichenko, Polina and Izmailov, Pavel and Wilson, Andrew Gordon},
  journal={arXiv preprint arXiv:2204.02937},
  year={2022}
}

@article{hu2022lora,
  title={Lora: Low-rank adaptation of large language models.},
  author={Hu, Edward J and Shen, Yelong and Wallis, Phillip and Allen-Zhu, Zeyuan and Li, Yuanzhi and Wang, Shean and Wang, Lu and Chen, Weizhu and others},
  journal={ICLR},
  volume={1},
  number={2},
  pages={3},
  year={2022}
}

@book{montgomery2021introduction,
  title={Introduction to linear regression analysis},
  author={Montgomery, Douglas C and Peck, Elizabeth A and Vining, G Geoffrey},
  year={2021},
  publisher={John Wiley \& Sons}
}

@article{han2024learning,
  title={Learning Prehensile Dexterity by Imitating and Emulating State-Only Observations},
  author={Han, Yunhai and Chen, Zhenyang and Williams, Kyle A and Ravichandar, Harish},
  journal={IEEE Robotics and Automation Letters},
  year={2024},
  publisher={IEEE}
}

@article{wang2024localized,
  title={Localized Linear Temporal Dynamics for Self-supervised Skeleton Action Recognition},
  author={Wang, Xinghan and Mu, Yadong},
  journal={IEEE Transactions on Multimedia},
  year={2024},
  publisher={IEEE}
}

@article{zheng2022hybrid,
  title={Hybrid scheme of kinematic analysis and Lagrangian Koopman operator analysis for short-term precipitation forecasting},
  author={Zheng, Shitao and Miyamoto, Takashi and Iwanami, Koyuru and Shimizu, Shingo and Kato, Ryohei},
  journal={Journal of Disaster Research},
  volume={17},
  number={7},
  pages={1140--1149},
  year={2022},
  publisher={Fuji Technology Press Ltd.}
}

@inproceedings{millard2024deep,
  title={Deep learning for koopman operator estimation in idealized atmospheric dynamics},
  author={Millard, David and Carr, Arielle and Gaudreault, St{\'e}phane},
  booktitle={2024 IEEE International Conference on Big Data (BigData)},
  pages={1407--1412},
  year={2024},
  organization={IEEE}
}

@article{navarra2021estimation,
  title={Estimation of Koopman transfer operators for the equatorial Pacific SST},
  author={Navarra, Antonio and Tribbia, Joe and Klus, Stefan},
  journal={Journal of the Atmospheric Sciences},
  volume={78},
  number={4},
  pages={1227--1244},
  year={2021}
}

@inproceedings{chien2022gpu,
  title={GPU-based homotopy continuation for minimal problems in computer vision},
  author={Chien, Chiang-Heng and Fan, Hongyi and Abdelfattah, Ahmad and Tsigaridas, Elias and Tomov, Stanimire and Kimia, Benjamin},
  booktitle={Proceedings of the IEEE/CVF Conference on Computer Vision and Pattern Recognition},
  pages={15765--15776},
  year={2022}
}

@book{seber2012linear,
  title={Linear regression analysis},
  author={Seber, George AF and Lee, Alan J},
  year={2012},
  publisher={John Wiley \& Sons}
}

@article{boots2007constraint,
  title={A constraint generation approach to learning stable linear dynamical systems},
  author={Boots, Byron and Gordon, Geoffrey J and Siddiqi, Sajid},
  journal={Advances in neural information processing systems},
  volume={20},
  year={2007}
}

@article{mezic2020koopman,
  title={Koopman operator, geometry, and learning},
  author={Mezic, Igor},
  journal={arXiv preprint arXiv:2010.05377},
  year={2020}
}

@article{mittal2012making,
  title={Making a “completely blind” image quality analyzer},
  author={Mittal, Anish and Soundararajan, Rajiv and Bovik, Alan C},
  journal={IEEE Signal processing letters},
  volume={20},
  number={3},
  pages={209--212},
  year={2012},
  publisher={IEEE}
}

@book{trefethen2022numerical,
  title={Numerical linear algebra},
  author={Trefethen, Lloyd N and Bau, David},
  year={2022},
  publisher={SIAM}
}

@book{horn2012matrix,
  title={Matrix analysis},
  author={Horn, Roger A and Johnson, Charles R},
  year={2012},
  publisher={Cambridge university press}
}

@article{mezic2013analysis,
  title={Analysis of fluid flows via spectral properties of the Koopman operator},
  author={Mezi{\'c}, Igor},
  journal={Annual review of fluid mechanics},
  volume={45},
  number={1},
  pages={357--378},
  year={2013},
  publisher={Annual Reviews}
}

@article{rowley2009spectral,
  title={Spectral analysis of nonlinear flows},
  author={Rowley, Clarence W and Mezi{\'c}, Igor and Bagheri, Shervin and Schlatter, Philipp and Henningson, Dan S},
  journal={Journal of fluid mechanics},
  volume={641},
  pages={115--127},
  year={2009},
  publisher={Cambridge University Press}
}

@article{chen2024korol,
  title={KOROL: Learning Visualizable Object Feature with Koopman Operator Rollout for Manipulation},
  author={Chen, Hongyi and Abuduweili, Abulikemu and Agrawal, Aviral and Han, Yunhai and Ravichandar, Harish and Liu, Changliu and Ichnowski, Jeffrey},
  journal={arXiv preprint arXiv:2407.00548},
  year={2024}
}

@article{li2024continual,
  title={Continual Learning and Lifting of Koopman Dynamics for Linear Control of Legged Robots},
  author={Li, Feihan and Abuduweili, Abulikemu and Sun, Yifan and Chen, Rui and Zhao, Weiye and Liu, Changliu},
  journal={arXiv preprint arXiv:2411.14321},
  year={2024}
}

@ARTICLE{Bruder2021Soft,
  author={Bruder, Daniel and Fu, Xun and Gillespie, R. Brent and Remy, C. David and Vasudevan, Ram},
  journal={IEEE Robotics and Automation Letters}, 
  title={\href{https://ieeexplore.ieee.org/document/9477047}{Koopman-Based Control of a Soft Continuum Manipulator Under Variable Loading Conditions}}, 
  year={2021},
  volume={6},
  number={4},
  pages={6852-6859},
  doi={10.1109/LRA.2021.3095268}}

@inproceedings{han2023KODex,
  title={On the Utility of Koopman Operator Theory in Learning Dexterous Manipulation Skills},
  author={Han, Yunhai and Xie, Mandy and Zhao, Ye and Ravichandar, Harish},
  booktitle={Proceedings of The 7th Conference on Robot Learning},
  year={2023},
  pages={106--126},
  volume={229},
  organization={PMLR}
}

@article{liu2024koopa,
  title={Koopa: Learning non-stationary time series dynamics with koopman predictors},
  author={Liu, Yong and Li, Chenyu and Wang, Jianmin and Long, Mingsheng},
  journal={Advances in Neural Information Processing Systems},
  volume={36},
  year={2024}
}

@article{takeishi2017learning,
  title={Learning Koopman invariant subspaces for dynamic mode decomposition},
  author={Takeishi, Naoya and Kawahara, Yoshinobu and Yairi, Takehisa},
  journal={Advances in neural information processing systems},
  volume={30},
  year={2017}
}

@article{williams2015data,
  title={\href{https://link.springer.com/article/10.1007/s00332-015-9258-5}{A data--driven approximation of the koopman operator: Extending dynamic mode decomposition}},
  author={Williams, Matthew O and Kevrekidis, Ioannis G and Rowley, Clarence W},
  journal={Journal of Nonlinear Science},
  volume={25},
  number={6},
  pages={1307--1346},
  year={2015},
  publisher={Springer}
}

@book{boyd2004convex,
  title={Convex optimization},
  author={Boyd, Stephen and Vandenberghe, Lieven},
  year={2004},
  publisher={Cambridge university press}
}

@book{robinson2012introduction,
  title={An introduction to dynamical systems: continuous and discrete},
  author={Robinson, Rex Clark},
  volume={19},
  year={2012},
  publisher={American Mathematical Soc.}
}

@incollection{ljung1998system,
  title={System identification},
  author={Ljung, Lennart},
  booktitle={Signal analysis and prediction},
  pages={163--173},
  publisher={Springer}
}

@article{kim2024learning,
  title={Safe Navigation of Bipedal Robots via Koopman Operator-Based Model Predictive Control},
  author={Kim, Jeonghwan and Han, Yunhai and Ravichandar, Harish and Ha, Sehoon},
  journal={arXiv preprint arXiv:2409.14736},
  year={2024}
}

@INPROCEEDINGS{Rajeswaran2018DAPG,
    AUTHOR    = {Aravind Rajeswaran AND Vikash Kumar AND Abhishek Gupta AND
                 Giulia Vezzani AND John Schulman AND Emanuel Todorov AND Sergey Levine},
    TITLE     = {\href{http://www.roboticsproceedings.org/rss14/p49.pdf}{Learning Complex Dexterous Manipulation with Deep Reinforcement Learning and Demonstrations}},
    BOOKTITLE = {Proceedings of Robotics: Science and Systems (RSS)},
    YEAR      = {2018},
}

@article{Koopman1931Koopman,
author = {B. O. Koopman },
title = {\href{https://www.pnas.org/doi/10.1073/pnas.17.5.315}{Hamiltonian Systems and Transformation in Hilbert Space}},
journal = {Proceedings of the National Academy of Sciences},
volume = {17},
number = {5},
pages = {315-318},
year = {1931},
doi = {10.1073/pnas.17.5.315}}

@inproceedings{boroojeny2024spectrum,
  title={Spectrum Extraction and Clipping for Implicitly Linear Layers},
  author={Boroojeny, Ali Ebrahimpour and Telgarsky, Matus and Sundaram, Hari},
  booktitle={International Conference on Artificial Intelligence and Statistics},
  pages={2971--2979},
  year={2024},
  organization={PMLR}
}

@article{sedghi2018singular,
  title={The singular values of convolutional layers},
  author={Sedghi, Hanie and Gupta, Vineet and Long, Philip M},
  journal={arXiv preprint arXiv:1805.10408},
  year={2018}
}

@article{senderovich2022towards,
  title={Towards practical control of singular values of convolutional layers},
  author={Senderovich, Alexandra and Bulatova, Ekaterina and Obukhov, Anton and Rakhuba, Maxim},
  journal={Advances in Neural Information Processing Systems},
  volume={35},
  pages={10918--10930},
  year={2022}
}

@inproceedings{huang2016learning,
  title={Learning Stable Linear Dynamical Systems with the Weighted Least Square Method.},
  author={Huang, Wen-bing and Le-le Cao and Sun, Fuchun and Zhao, Deli and Liu, Huaping and Yu, Shanshan},
  booktitle={IJCAI},
  volume={1599},
  pages={1605},
  year={2016}
}

@article{mamakoukas2020memory,
  title={Memory-efficient learning of stable linear dynamical systems for prediction and control},
  author={Mamakoukas, Giorgos and Xherija, Orest and Murphey, Todd},
  journal={Advances in Neural Information Processing Systems},
  volume={33},
  pages={13527--13538},
  year={2020}
}

@INPROCEEDINGS{lacy2003subspace_conf,
  author={Lacy, S.L. and Bernstein, D.S.},
  booktitle={Proceedings of the 2002 American Control Conference (IEEE Cat. No.CH37301)}, 
  title={Subspace identification with guaranteed stability using constrained optimization}, 
  year={2002},
  volume={4},
  number={},
  pages={3307-3312 vol.4}}

@article{lacy2003subspace,
  title={Subspace identification with guaranteed stability using constrained optimization},
  author={Lacy, Seth L and Bernstein, Dennis S},
  journal={IEEE Transactions on automatic control},
  volume={48},
  number={7},
  pages={1259--1263},
  year={2003},
  publisher={IEEE}
}

@article{mamakoukas2023learning,
  title={Learning stable models for prediction and control},
  author={Mamakoukas, Giorgos and Abraham, Ian and Murphey, Todd D},
  journal={IEEE Transactions on Robotics},
  year={2023},
  volume={39},
  number={3},
  pages={2255-2275},
  publisher={IEEE}
}

@article{gillis2020note,
  title={A note on approximating the nearest stable discrete-time descriptor systems with fixed rank},
  author={Gillis, Nicolas and Karow, Michael and Sharma, Punit},
  journal={Applied Numerical Mathematics},
  volume={148},
  pages={131--139},
  year={2020},
  publisher={Elsevier}
}

@article{schmid2022dynamic,
  title={Dynamic mode decomposition and its variants},
  author={Schmid, Peter J},
  journal={Annual Review of Fluid Mechanics},
  volume={54},
  number={1},
  pages={225--254},
  year={2022},
  publisher={Annual Reviews}
}

@INPROCEEDINGS{Rajeswaran-RSS-18,

    AUTHOR    = {Aravind Rajeswaran AND Vikash Kumar AND Abhishek Gupta AND

                 Giulia Vezzani AND John Schulman AND Emanuel Todorov AND Sergey Levine},

    TITLE     = "{Learning Complex Dexterous Manipulation with Deep Reinforcement Learning and Demonstrations}",

    BOOKTITLE = {Proceedings of Robotics: Science and Systems (RSS)},

    YEAR      = {2018},

}

@inproceedings{hadji2018new,
  title={A new large scale dynamic texture dataset with application to convnet understanding},
  author={Hadji, Isma and Wildes, Richard P},
  booktitle={Proceedings of the European Conference on Computer Vision (ECCV)},
  pages={320--335},
  year={2018}
}

@article{gaz2019dynamic,
  title={Dynamic identification of the franka emika panda robot with retrieval of feasible parameters using penalty-based optimization},
  author={Gaz, Claudio and Cognetti, Marco and Oliva, Alexander and Giordano, Paolo Robuffo and De Luca, Alessandro},
  journal={IEEE Robotics and Automation Letters},
  volume={4},
  number={4},
  pages={4147--4154},
  year={2019},
  publisher={IEEE}
}

@INPROCEEDINGS{Todorov2012MUJuCO,
  author={Todorov, Emanuel and Erez, Tom and Tassa, Yuval},
  booktitle={2012 IEEE/RSJ International Conference on Intelligent Robots and Systems}, 
  title={MuJoCo: A physics engine for model-based control}, 
  year={2012},
  pages={5026-5033}
}

@inproceedings{qin2022dexmv,
  title={Dexmv: Imitation learning for dexterous manipulation from human videos},
  author={Qin, Yuzhe and Wu, Yueh-Hua and Liu, Shaowei and Jiang, Hanwen and Yang, Ruihan and Fu, Yang and Wang, Xiaolong},
  booktitle={European Conference on Computer Vision},
  pages={570--587},
  year={2022},
  organization={Springer}
}

@inproceedings{shaw2023videodex,
  title={Videodex: Learning dexterity from internet videos},
  author={Shaw, Kenneth and Bahl, Shikhar and Pathak, Deepak},
  booktitle={Conference on Robot Learning},
  pages={654--665},
  year={2023},
  organization={PMLR}
}

@inproceedings{bahl2022human,
            title={Human-to-Robot Imitation in the Wild},
            author={Bahl, Shikhar and Gupta, Abhinav and Pathak, Deepak},
  booktitle = {Robotics: Science and Systems (RSS)},
  year      = {2022}
          }


\clearpage
\appendix

\begin{center}
    {\LARGE{Appendices}}
\end{center}
\section{Approximation error bounds for non-diagonalizable system matrices}\label{sec:proof}

\begin{lemma}
\label{lemma_dense}
For any $\gamma > 0$ and any non-diagonalizable matrix $\hat{A}_{LS}$, there exists a diagonalizable matrix $\tilde{A}_{LS} \in \mathbb{R}^{n \times n}$ s.t. $||\hat{A}_{LS} - \tilde{A}_{LS}||_1 < \gamma$.
\end{lemma}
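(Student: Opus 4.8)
The plan is to invoke the classical fact that matrices with $n$ pairwise-distinct eigenvalues are dense in $\mathbb{R}^{n\times n}$, together with the observation that any such matrix is diagonalizable (over $\mathbb{C}$). Concretely, for $B \in \mathbb{R}^{n\times n}$ let $p_B(t) = \det(tI - B)$ be its characteristic polynomial, whose coefficients are polynomials in the entries of $B$; then the discriminant $\Delta(B) \coloneqq \mathrm{disc}(p_B)$ is itself a real polynomial map $\Delta : \mathbb{R}^{n\times n} \to \mathbb{R}$. The three elementary facts I would rely on are: (i) $\Delta(B) \neq 0$ iff $B$ has $n$ distinct eigenvalues in $\mathbb{C}$; (ii) $\Delta$ is not the zero polynomial, witnessed e.g. by $\Delta(\mathrm{diag}(1,2,\dots,n)) \neq 0$; and (iii) a polynomial on $\mathbb{R}^{n^2}$ that is not identically zero cannot vanish on any nonempty open set. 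Granting these, the zero set $Z = \{B : \Delta(B) = 0\}$ has empty interior, so its complement (the set of matrices with simple spectrum) is dense.

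First I would set up $\Delta$ and record facts (i)--(iii), deferring (iii) to the standard argument (a nonzero one-variable real polynomial has finitely many roots; induct on the number of variables, or equivalently note that $Z$ has Lebesgue measure zero). Then, given the non-diagonalizable $\hat{A}_{LS}$ and $\gamma > 0$, I would consider the open ball $\{B : \|\hat{A}_{LS} - B\|_1 < \gamma\}$, which is nonempty and open; since $Z$ has empty interior this ball is not contained in $Z$, hence contains some $\tilde{A}_{LS}$ with $\Delta(\tilde{A}_{LS}) \neq 0$. By (i), $\tilde{A}_{LS}$ has $n$ distinct eigenvalues, so it has $n$ linearly independent eigenvectors and is therefore diagonalizable over $\mathbb{C}$; it is real by construction and within $\gamma$ of $\hat{A}_{LS}$ in the $1$-norm, which is exactly the claim.

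A fully constructive alternative I would mention is to take a real Schur factorization $\hat{A}_{LS} = Q T Q^\top$ with $Q$ orthogonal and $T$ block upper-triangular, its $1\times1$ and $2\times2$ diagonal blocks carrying the real eigenvalues and the complex-conjugate pairs. One perturbs only the diagonal blocks by a small, generically chosen amount so that all $n$ eigenvalues become distinct while $\tilde{T}$ stays real, and sets $\tilde{A}_{LS} = Q \tilde{T} Q^\top$; then $\|\hat{A}_{LS} - \tilde{A}_{LS}\|_1 \le c(n)\,\|T - \tilde{T}\|$ can be made smaller than $\gamma$, where $c(n)$ is the dimension-dependent constant relating $\|\cdot\|_1$ to a unitarily-invariant norm (needed because $\|\cdot\|_1$ is not itself unitarily invariant).

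I expect the only real subtlety --- the main obstacle --- to be the bookkeeping that keeps everything real: in the genericity argument, confirming that $\Delta$ over $\mathbb{R}$ is genuinely not the zero polynomial and that its real zero set has empty interior; in the constructive argument, checking that the $2\times2$ blocks can be nudged so as to separate their eigenvalues from all the others without leaving $\mathbb{R}^{n\times n}$. Everything else (simple spectrum $\Rightarrow$ diagonalizable, and norm equivalence up to dimension constants) is routine.
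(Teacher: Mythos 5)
Your argument is correct and follows essentially the same route as the paper, which simply cites the density of real matrices diagonalizable over $\mathbb{C}$ in $\mathbb{R}^{n\times n}$ (deferring details to matrix perturbation theory); your discriminant/genericity argument is the standard proof of exactly that density fact, and your real-Schur alternative is a valid constructive variant of it.
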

\begin{proof}
Lemma.~\ref{lemma_dense} can be proved by the fact that the set of real matrices that are diagonalizable over $\mathbb{C}$ is dense in $\mathbb{R}^{n \times n}$. The details can be found from the matrix perturbation theory~\citep{horn2012matrix, trefethen2022numerical}.
\end{proof}

\begin{theorem}[Finite-horizon approximation of trajectories]
Let $K\in\mathbb{N}$ represent a fixed finite horizon of interest.
For any $\varepsilon>0$, any $A\in\mathbb{R}^{n\times n}$ (possibly non-diagonalizable), and any
$z\in\mathbb{R}^n$, there exists a matrix $\tilde A\in\mathbb{R}^{n\times n}$ that is diagonalizable
over $\mathbb{C}$ such that for all $k\in\{0,1,\dots,K\}$,
\[
\|A^k z-\tilde A^k z\|_2 \le \varepsilon.
\]
\end{theorem}

\begin{proof}
Let $E:=\tilde A-A$.  For any integer $k\ge 1$, the following identity holds without any
commutativity assumptions:
\[
\tilde A^k - A^k
= \sum_{i=0}^{k-1} \tilde A^{\,k-1-i}\,E\,A^i .
\]
Therefore, by submultiplicativity of the operator norm,
\[
\|\tilde A^k - A^k\|_2
\le \|E\|_2 \sum_{i=0}^{k-1}\|\tilde A\|_2^{k-1-i}\|A\|_2^{i}.
\]
Choose $\|E\|_2\le 1$, which implies $\|\tilde A\|_2\le \|A\|_2+\|E\|_2\le \|A\|_2+1=: \alpha$.
Then for all $k\le K$,
\[
\|\tilde A^k - A^k\|_2
\le \|E\|_2 \sum_{i=0}^{k-1}\alpha^{k-1-i}\alpha^i
= \|E\|_2\, k\, \alpha^{k-1}
\le \|E\|_2\, K\, \alpha^{K-1}.
\]
Hence,
\[
\|A^k z-\tilde A^k z\|_2 \le \|\tilde A^k-A^k\|_2\|z\|_2
\le \|E\|_2\, K\,\alpha^{K-1}\,\|z\|_2.
\]
Now choose
\[
\|E\|_2 \le \min\Bigl\{1,\ \frac{\varepsilon}{K\,\alpha^{K-1}\,\|z\|_2}\Bigr\}.
\]
Given the density of diagonalizable matrices in $\mathbb{R}^{n \times n}$ (see Lemma~1), we can achieve the bound in $E$ with an appropriate diagonalizable $\tilde A$.
This yields the desired bound for all $k\in\{0,\dots,K\}$.
\end{proof}

\noindent\textbf{Remark.}
The above theorem is specifically stated for a fixed finite horizon $K$.
Without additional assumptions (e.g., power-boundedness of $\hat{A}_{LS}$ and $\tilde{A}_{LS}$), a uniform bound
as $k\to\infty$ generally cannot be guaranteed since $\|A^k\|$ may grow with $k$.

\section{Spectrum Clipping for Controlled Systems}\label{sec:SC_control}
Consider a discrete-time controlled linear system as
\begin{equation}
\label{eqn:system_control}
\mathrm{x}_{t+1} = A \mathrm{x}_{t} + B\mathrm{u}_{t}, 
\end{equation}
where $\mathrm{x}_t \in \mathcal{X} \subset \mathbb{R}^n, \mathrm{u}_t \in \mathcal{U} \subset \mathbb{R}^m$ are the system state and control signal at time $t$, respectively. Similarly, $A \in \mathcal{S}_{A}$ is the unknown system matrix, and $\mathcal{S}_{A} \subset \mathbb{R}^{n \times n}$ is the subspace that contains stable matrices: $\mathcal{S}_{A} \triangleq \{A \in \mathbb{R}^{n \times n}| \max \{|\lambda_i(A)|\}^n_{i=1} <= 1\}$, and $B \in \mathcal{S}_{B}$ is the unknown control matrix, and $\mathcal{S}_{B} \subset \mathbb{R}^{n \times m}$. 


\textbf{Unconstrained Learning}: Now, suppose we have a dataset $D = [\mathrm{x}_1, \mathrm{u}_1, \mathrm{x}_2, \mathrm{u}_2, \cdots, \mathrm{x}_T]$. Let $Y = [\mathrm{x}_2, \mathrm{x}_3, \cdots, \mathrm{x}_T] \in \mathbb{R}^{n \times T-1}, X = [\mathrm{x}_1, \mathrm{x}_2, \cdots, \mathrm{x}_{T-1}] \in \mathbb{R}^{n \times T-1}$, and $U = [\mathrm{u}_1, \mathrm{u}_2, \cdots, \mathrm{u}_{T-1}] \in \mathbb{R}^{m \times T-1}$, then to learn the linear system from the dataset $D$ is to minimize the similar regression objective:

\begin{equation}
\label{eqn:data_driven_controlled}
\hat{A}_{LS}, \hat{B} = \arg\min_{\{\Theta, \Gamma\}} || Y - AX - BU||_F^2,
\end{equation}
where $\Theta \in \mathbb{R}^{n \times n}$ and $\Gamma \in \mathcal{S}_{B}$ are both the learnable parameters, and $||\cdot||_F$ is the matrix Frobenius norm.
Similarly, optimizing Eq.~(\ref{eqn:data_driven_controlled}) does not guarantee stability. 

\textbf{Learning Stable LDS with Control Inputs}: To this end, we can rewrite the optimization problem Eq.~(\ref{eqn:data_driven_controlled}) to include stability constraints:
\begin{equation}
\label{eqn:data_driven_stable_controlled}
\hat{A}_S, \hat{B} = \arg\min_{\{\Theta\in\mathcal{S}_{A}, \Gamma\}} || Y - AX - BU||_F^2,
\end{equation}
To find the approximate solution to Eq.~(\ref{eqn:data_driven_stable_controlled}), we first compute $\hat{A}_{LS}$ and $\hat{B}$ by optimizing the least square regression (Eq.~(\ref{eqn:data_driven_controlled})). Then we apply \textit{SC} method to $\hat{A}_{LS}$ and obtain $\hat{A}_{SC}$ as described in Section.~\ref{sec:clipping}, while keeping $\hat{B}$ unchanged.

The experiment results can be found in Appendix.~\ref{sec:control}.


\section{Tradeoff between stability and expressivity}
\label{appendix:stability-expresivity tradeoff}
As mentioned in Section.~\ref{sec:clipping}, the choice of $\varepsilon$ values establishes a trade-off between expressivity and stability. 
In the experiments reported below, we systematically analyze this fundamental trade-off.
We omit the analysis of compute and memory efficiency since $\varepsilon$ does not impact them.

\textbf{\textit{UCSD} and \textit{DTDB} dataset}: We first compute the \textit{Reconstruction error} of matrices clipped to different $\varepsilon$ values,
\begin{figure*}[!thb]
    \begin{minipage}[b]{0.5\textwidth}
    \centering
\includegraphics[width=\textwidth]{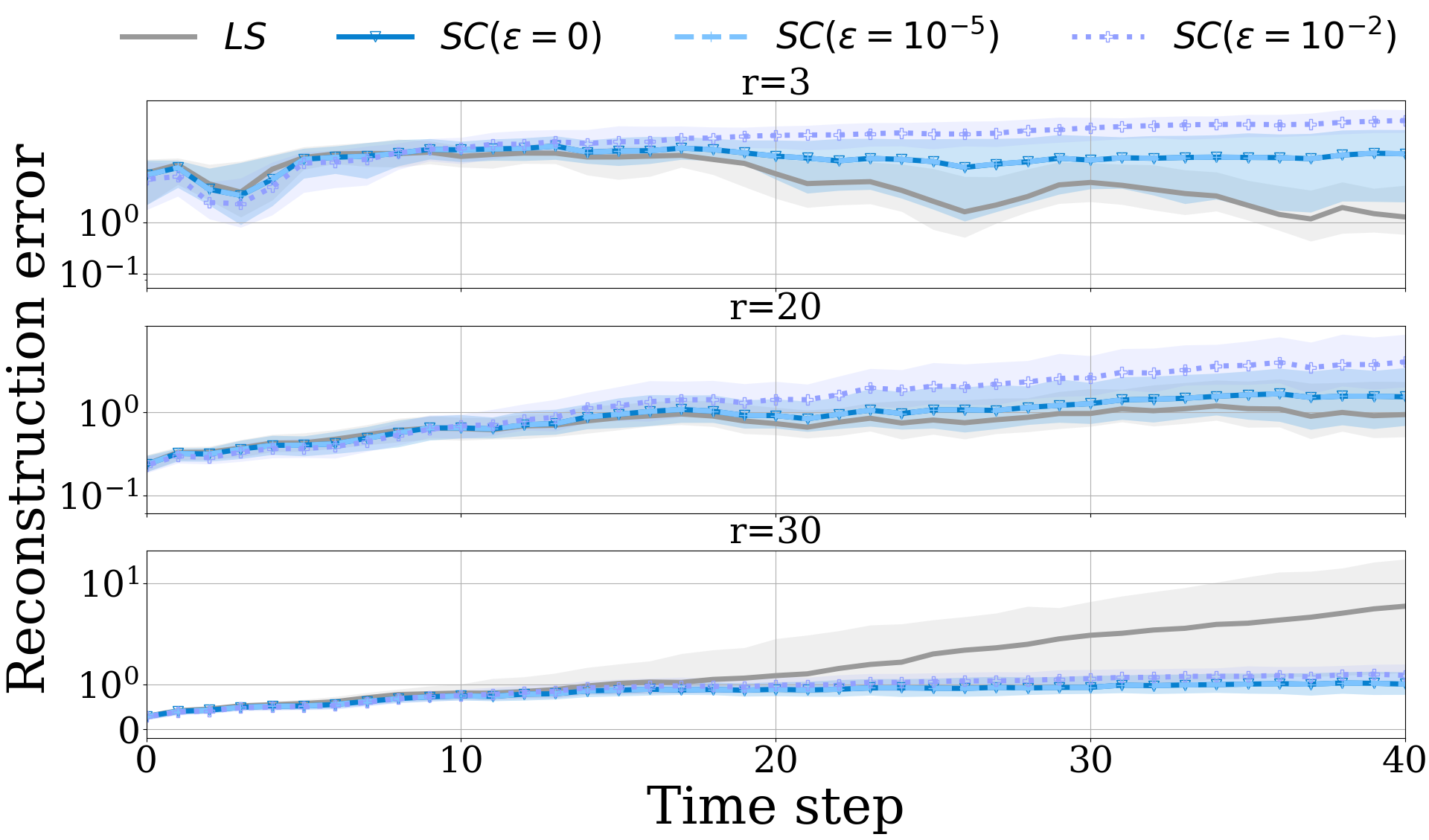}
    \end{minipage}
    \begin{minipage}[b]{0.5\textwidth}
    \centering
        \includegraphics[width=\textwidth]{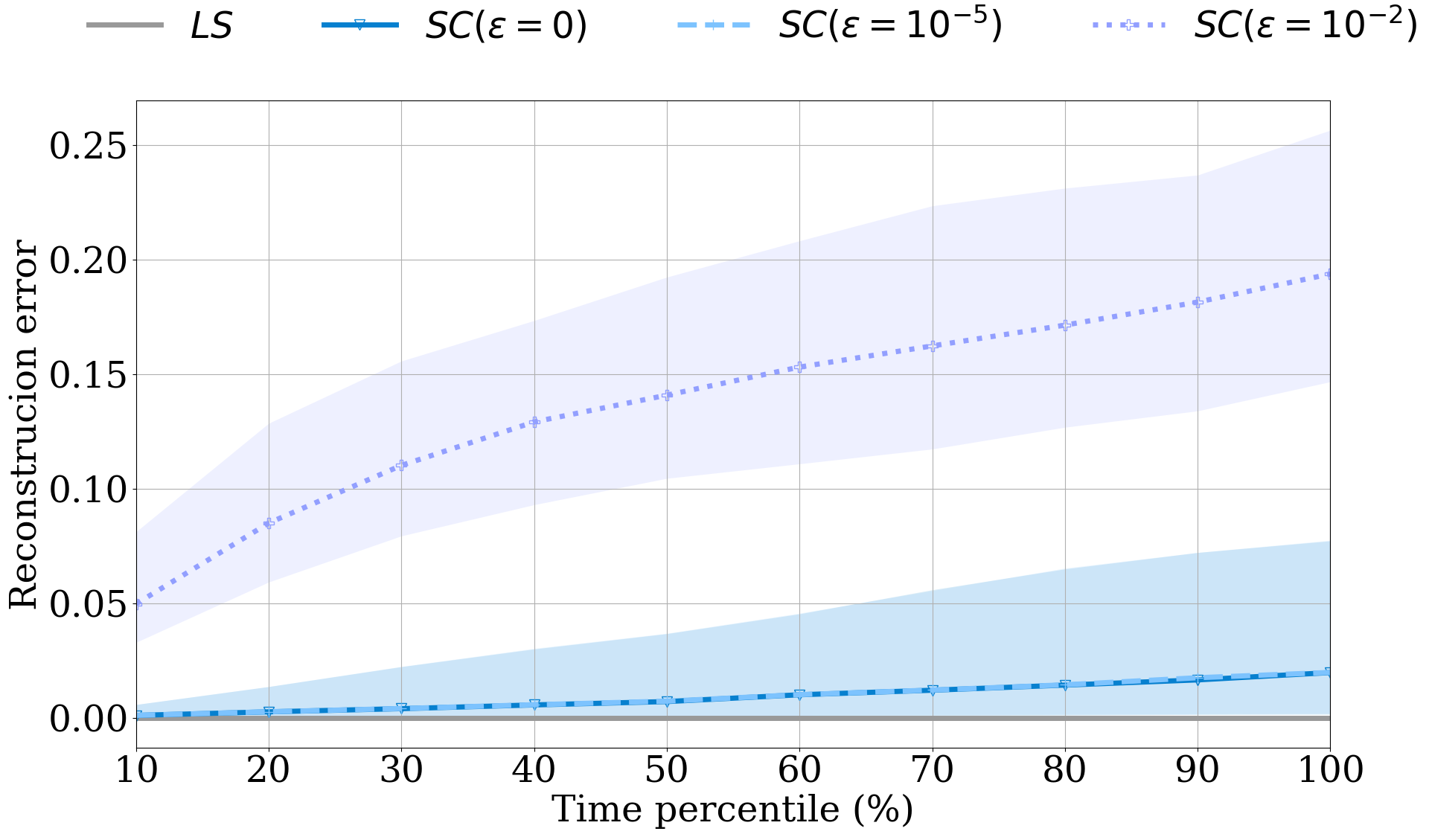}
    \end{minipage}
    \caption{Left figure: average reconstruction error evaluated on different subspace dimensions of the \textit{UCSD} dataset. Solid lines represent median values, and shaded areas indicate quartile values. Right figure: Average reconstruction error evaluated at each time step percentile of the \textit{DTDB} dataset. Solid lines represent median values, and shaded areas indicate quartile values.}
    \label{fig:ucsd_dtdb_error_eps}
\end{figure*}
\begin{figure*}[!thb]
    \centering
\includegraphics[width=0.5\textwidth]{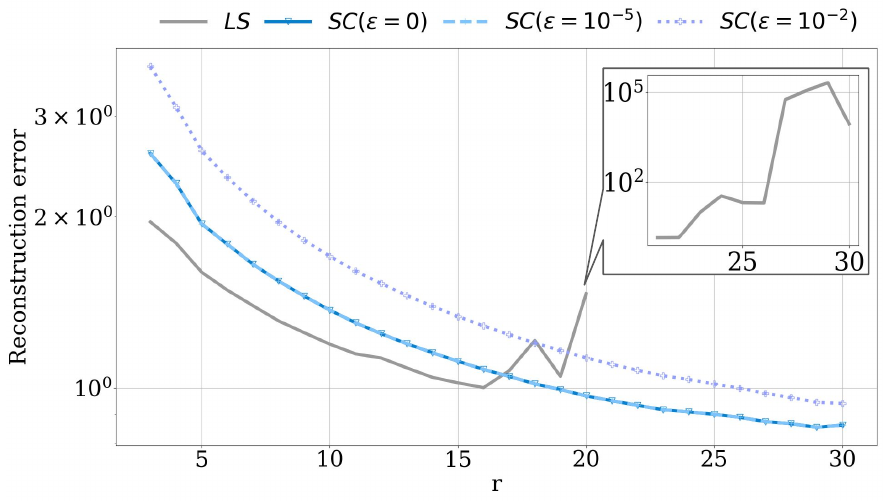}
        \caption{Average reconstruction error as a function of subspace dimensions of the \textit{UCSD} dataset.}
        \label{fig:error_ucsd_dim_epsilon}
\end{figure*}
and show the results in Fig.~\ref{fig:ucsd_dtdb_error_eps}. We can observe that the \textit{Reconstruction error} of \textit{SC}($\varepsilon=0$) and $SC(\varepsilon=10^{-5})$ are not distinguishable. 
However, if the $\varepsilon$ is chosen larger, i.e., $\varepsilon=10^{-2}$, the error becomes significantly larger, primarily because the learned system is too conservative and lacks expressivity.

\begin{figure*}[!h]
    \begin{minipage}[b]{\textwidth}
    \centering
        \includegraphics[width=\textwidth]{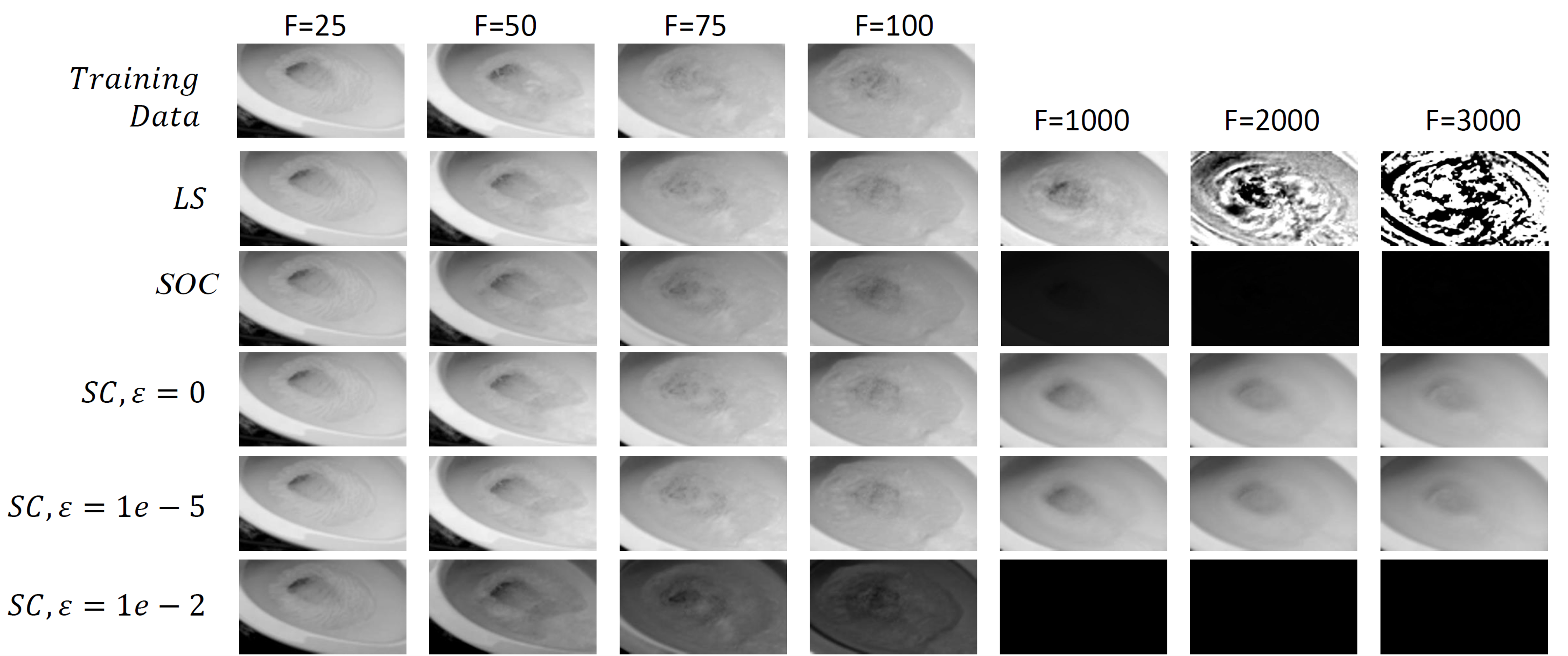}
        \captionsetup{labelformat=empty}
        \caption*{\small{\textit{DTDB Sequence 185: A flushing toilet.}}}
    \end{minipage}
    \begin{minipage}[b]{\textwidth}
    \centering
        \includegraphics[width=\textwidth]{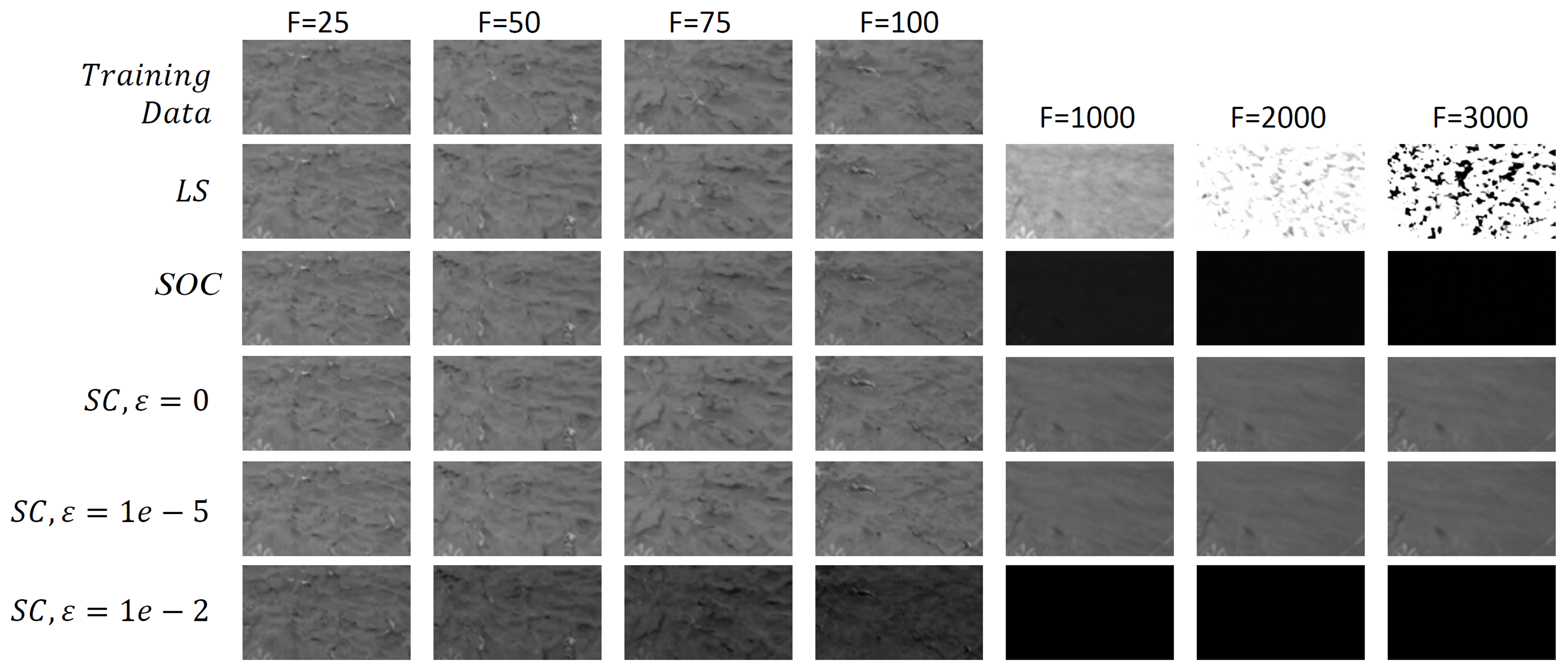}
        \captionsetup{labelformat=empty}
        \caption*{\small{\textit{DTDB Sequence 205: A running stream.}}}
    \end{minipage}
    \caption{Qualitative prediction results on two example sequences from DTDB dataset. 
    }
    \label{fig:qual_dtdb_sc}
\end{figure*}
Next, similar trend can be seen from the qualitative results of two selected sequences (\textit{DTDB} dataset) shown in Fig.~\ref{fig:qual_dtdb_sc}, where \textit{SC}($\varepsilon=0$) and \textit{SC}($\varepsilon=10^{-5}$) yield similar outcomes, while \textit{SC}($\varepsilon=10^{-2}$) lacks expressivity, resulting in turning black. In addition, the quantitative results reported in Table.~\ref{tab_magnitued_different_epsilon} also support this finding.
\begin{table}[!htb]
\centering
 \resizebox{0.75\linewidth}{!}{
\begin{tabular}{@{}lccc@{}}
\toprule
 & \textit{Average NIQE ($F=500$) ($\downarrow$)} & \textit{Average NIQE ($F=3000$) ($\downarrow$)} &  \textit{Moving ($\uparrow$)} \\ \toprule
\textit{\LS{LS}} & \LS{904.6265}  & \LS{6355.5817} & \LS{62.11\%}\\
\textit{SC($\varepsilon=0$)} & \textbf{18.8769}  & 1075.0140 & \textbf{54.39\%}\\
\textit{SC($\varepsilon=10^{-5}$)} & \textbf{18.8769} & \textbf{1075.0139} & 54.04\%\\ 
\textit{SC($\varepsilon=10^{-2}$)} & 706.0687 & 10000.0 & 0\%\\ 
\midrule
\addlinespace[2pt] 
\textit{Training Set} & \multicolumn{2}{c}{18.8768 (average over all training sequences)}  & 100.0\% \\
\bottomrule
\end{tabular}
}
\vspace{0.3em}
\caption{Quantitative metrics evaluating the predictions with varying $\varepsilon$ values on the \textit{DTDB} dataset.}
\label{tab_magnitued_different_epsilon}
\end{table}

\textbf{\textit{DexManip} dataset}: We find that the performance of \textit{SC}($\varepsilon=0$) and $SC(\varepsilon=10^{-5})$ are also nearly distinguishable, as shown in Fig.~\ref{fig:error_kodex_epsilon_ori} (\textit{Reconstruction error}) and Table.~\ref{table:kodex_sr_epsilon} and Table.~\ref{table:kodex_safe_epsilon} (\textit{Success \& Safety rates}). Also, we can observe that $SC(\varepsilon=10^{-2})$
lacks the expressivity needed to generate diverse motions, leading to a significant low task success rate, although it maintains a 100\% safety rate as all states converge to zero.

\begin{figure*}[!htb]
    \centering
    \includegraphics[width=0.7\textwidth,clip]{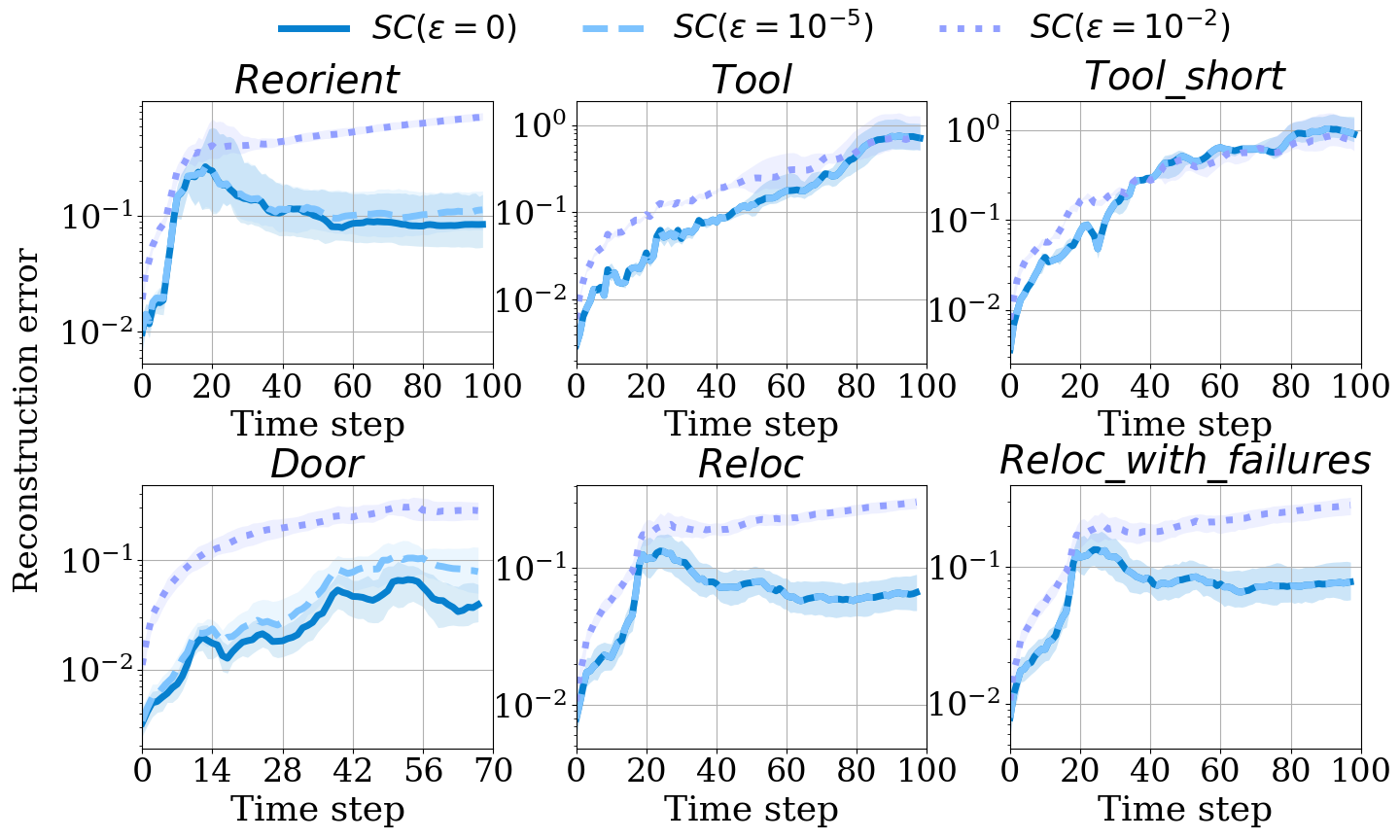}
    \caption{Average reconstruction error on the  \textit{Reorient}, \textit{Reloc}, \textit{Door}, \textit{Tool}, \textit{Reloc\_with\_failures} and \textit{Tool\_short} datasets. Solid lines represent mean values, and shaded areas indicate quartile values over the rollout trajectories.}
    \label{fig:error_kodex_epsilon_ori}
\end{figure*}

\begin{table}[!hbt]
\centering
\begin{tabular}{@{}lcccccc@{}}
\toprule
 & \textit{Reorient} & \textit{Door} & \textit{Tool} & \textit{Tool\_short} & \textit{Reloc} & \textit{Reloc\_with\_failures}\\ \toprule
\textit{\LS{LS}} & \LS{69.5\%} & \LS{75.3\%} & \LS{55.5\%} & \LS{3.3\%} & \LS{96.0\%} & \LS{9.6\%}\\
\textit{SC($\varepsilon=0$)} & 75.6\% & \textbf{76.6\%} & 
\textbf{94.0\%} & \textbf{90.7\%} & 93.8\% & \textbf{94.4\%}\\
\textit{SC($\varepsilon=10^{-5}$)} & 76.3\%& \textbf{76.6\%} & \textbf{94.0\%} & 89.0\% & \textbf{94.4\%} & 93.8\%\\
\textit{SC($\varepsilon=10^{-2}$)} & \textbf{83.2\%} & 12.0\% & 24.7\% & 0\% & 66.7\% & 42.9\%\\ \bottomrule
\end{tabular}
\vspace{0.3em}
\caption{Task Success Rate for learned policy with varying $\epsilon$ values.}
\label{table:kodex_sr_epsilon}
\end{table}

\begin{table}[!thb]
\centering
\begin{tabular}{@{}lcccccc@{}}
\toprule
 & \textit{Reorient} & \textit{Door} & \textit{Tool} & \textit{Tool\_short} & \textit{Reloc} & \textit{Reloc\_with\_failures}\\ \toprule
\textit{\LS{LS}} & \LS{100.0\%} & \LS{100.0\%} & \LS{0\%} & \LS{0\%} & \LS{100.0\%} & \LS{0.0\%}\\
\textit{SC($\varepsilon=0$)} & \textbf{100.0\%} & \textbf{100.0\%} & \textbf{100.0\%} & 99.5\% & \textbf{100.0\%} & \textbf{100.0\%} \\
\textit{SC($\varepsilon=10^{-5}$)} & \textbf{100.0\%} & \textbf{100.0\%} & \textbf{100.0\%} & 99.5\% & \textbf{100.0\%} & \textbf{100.0\%} \\
\textit{SC($\varepsilon=10^{-2}$)} & 99.2\% & \textbf{100.0\%} & \textbf{100.0\%} & \textbf{100.0\%} & \textbf{100.0\%} & \textbf{100.0\%}\\ \bottomrule
\end{tabular}
\vspace{0.3em}
\caption{Safety Rate for the learned policy with varying $\varepsilon$ values.}
\label{table:kodex_safe_epsilon}
\end{table}

\textbf{\textit{Franka Panda} dataset}: Though all variants of \textit{SC} perform similarly with sufficient data, we observe that $SC(\varepsilon=10^{-2})$ is the most sample-efficient and comparable to \textit{CG} and \textit{SOC} (see Fig.~\ref{fig:franka_pos_sc} and Table.~\ref{tab_distance_eps}). This is likely because the true spectral radius of the underlying system is likely closer to $0.99$.
\begin{figure*}[!htb]
    \centering
    \includegraphics[width=0.8\textwidth]{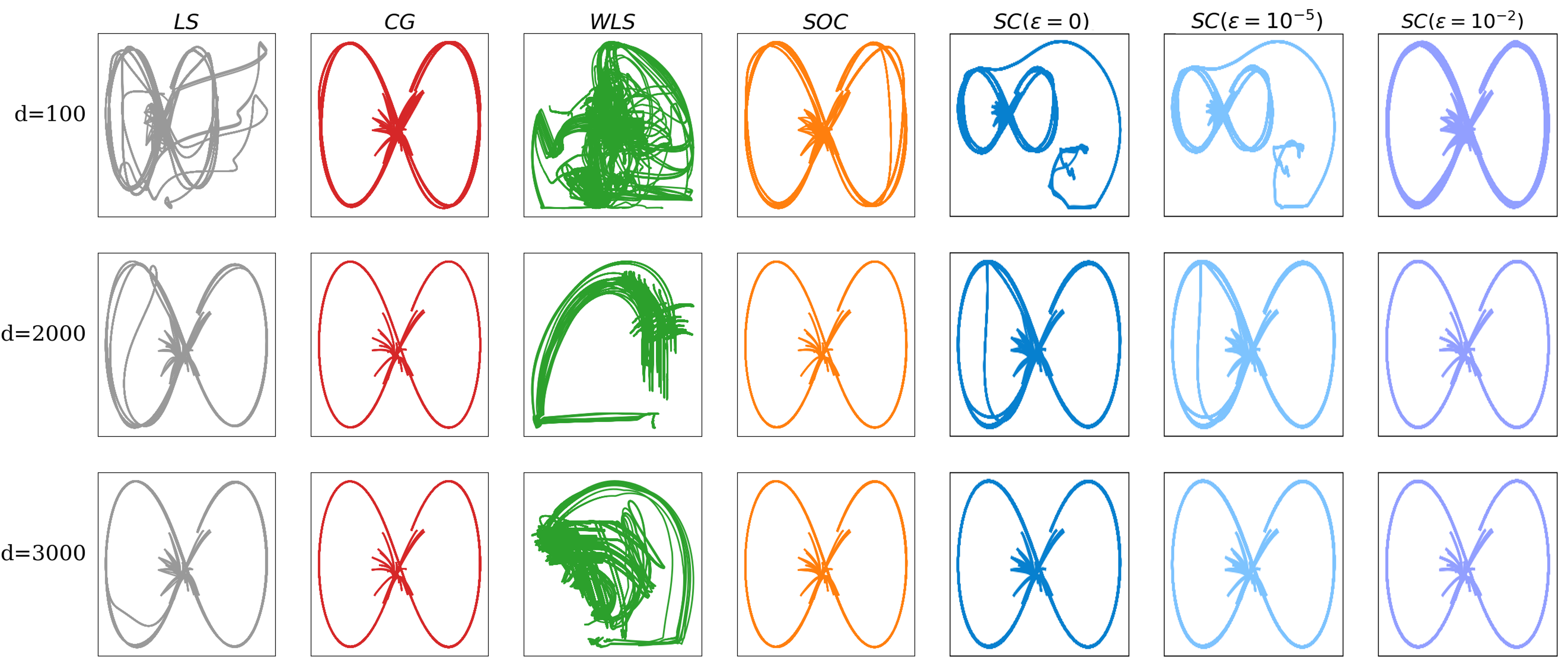}
    \caption{Rollouts of the end effector trajectory.}
    \label{fig:franka_pos_sc}
\end{figure*}

\begin{table}[!htb]
\centering
 \resizebox{\linewidth}{!}{
\begin{tabular}{@{}lccccc@{}}
\toprule
 & \textit{LS} & \textit{SC($\varepsilon=0$)} & \textit{SC($\varepsilon=10^{-5}$)} & \textit{SC($\varepsilon=10^{-2}$)}
 \\ \toprule
$d=100$ & $0.0748 \pm 0.1218$  & $0.0693 \pm 0.1370$ & $0.0686 \pm 0.1344$ & $0.0193 \pm 0.0074$\\
$d=2000$ & $0.0207 \pm 0.0173$ & $0.0183 \pm 0.0138$ & $0.0182 \pm 0.0132$ & $0.0175 \pm 0.0062$\\
$d=3000$ & $0.0164 \pm 0.0083$ & $0.0156 \pm 0.0057$ & $0.0156 \pm 0.0057$ & $0.0176 \pm 0.0062$\\
 \bottomrule
\end{tabular}
}
\vspace{0.3em}
\caption{Average prediction error of the end-effector's position (unit: $m$).}
\label{tab_distance_eps}
\end{table}

\section{Connections between \textit{SC} and Koopman modes}
\label{sec:koopman_modes}
We have also discovered interesting connections between \textit{SC} and Koopman modes, as mentioned in Section~\ref{sec:corrupted}, which provides us with valuable insights on how \textit{SC} works on corrupted datasets. For each eigenvalue $\lambda_i\ (i=1,...,n)$ of Koopman matrix $K$, we can find corresponding eigenvector $\mathbf{v_i}$ (i.e., the mode) and eigenvector $\mathbf{w_i}$ of adjoint matrix $K^*$, such that $K\mathbf{v_i}=\lambda_i\mathbf{v_i} \text{, and }
K^*\mathbf{w_i}=\overline{\lambda_i}\mathbf{w_i}$.
Then following \citep{rowley2009spectral}, we construct the corresponding eigenfunction $\varphi_i:\bold{R}^n\rightarrow C$, defined as $\varphi_i(\mathbf{z}) = \langle \mathbf{z}, \mathbf{w}_i \rangle$.
By doing so, the rollout of eigenfunctions can be represented by 
$\varphi_i(K\mathbf{z})=\lambda_i\varphi_i(\mathbf{z})$, and the states can be reconstructed by $\mathbf{z_k}=\sum_{i=1}^{n}\lambda_i^k\varphi_i(\mathbf{z_0})\mathbf{v_i}$.
Based on this formula, specific modes can be selected for rollouts.
\begin{figure*}[!thb]
    \centering \includegraphics[width=\textwidth, trim={0 0 0 0}]{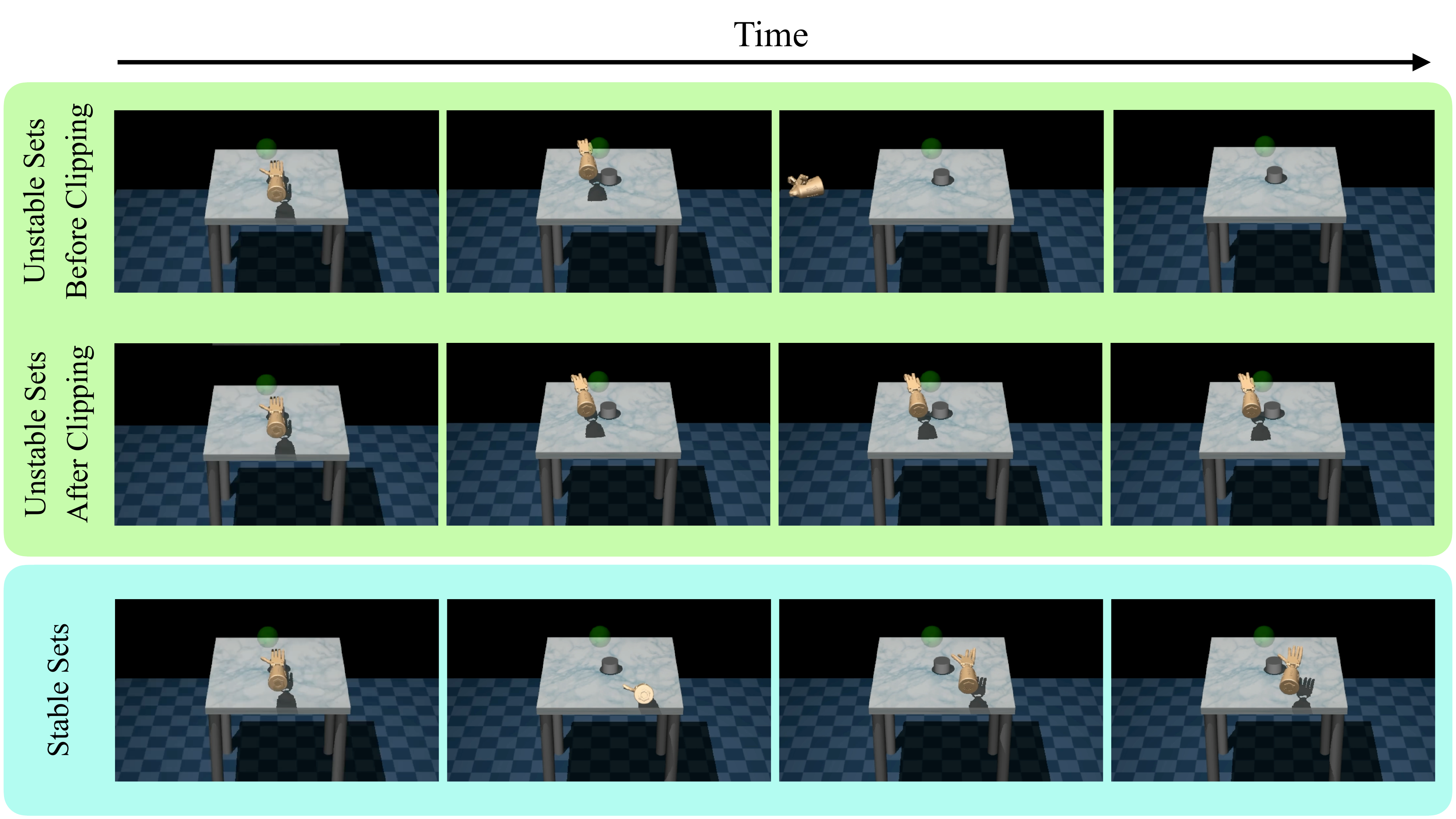}
\caption{Visualizations of rollout with two subsets of the Koopman modes. The green box represents rollouts using \textit{Unstable Set} (norm of eigenvalue larger than 1). The blue box represents rollout with the rest \textit{Stable Set}. In the green box, the upper line shows the rollout before clipping and the lower line shows it after clipping. It is obvious that after clipping, the rollout becomes stable, thereby allowing the composite rollout of both sets to safely complete the task (as shown in Table~\ref{table:kodex_task_hard}. The full policy rollout is available in the supplementary video.}
\label{fig:mode}
\end{figure*}
In our experiments, we first computed the Koopman matrix $K_f$ from $Reloc\_with\_failures$ dataset. We then sorted the eigenvalues by their norms from largest to smallest and divided them into two subsets for rollouts: (i) \textit{Unstable Set}: the first 52 Koopman modes with unstable eigenvalues, which have norms greater than 1, and (ii) \textit{Stable Set}: the remaining 707 modes. We then visualized the robot hand motions of each subset via the eigenfunction rollouts in Fig.~\ref{fig:mode}, respectively (see the supplementary video for full policy rollouts). It can be seen that the rollout of the \textit{Unstable Set} before clipping clearly generate unstable hand motions (see upper line of the green box), which results in the low success rate and safety rate when combined with \textit{Stable Set} for the final task performance (as shown in Table.~\ref{table:kodex_task_hard}). 
Further, we clipped the matrix $K_f$ and visualized the rollouts of the clipped \textit{Unstable Set} in Fig.~\ref{fig:mode}. It is obvious that after clipping, the robot hand motions become stable (see bottom line of the green box), enabling the composite system to safely complete the task (as also shown in Table~\ref{table:kodex_task_hard}). The full policy rollout is also available in the supplementary video.

\section{\textit{SC} can help learn stable LDS with control inputs}
\label{sec:control}

We finally evaluated \textit{SC}'s ability to learn stable LDSs with control inputs (see Appendix.~\ref{sec:SC_control} for ). 
To this end, we used \textit{Franka Panda}~\citep{gaz2019dynamic}, a dataset of Franka Panda Robot Arm's movements. The robot state space has 17 dimensions ($r=17$), including the coordinates of end effector ($\mathbb{R}^{3}$), joint angle values ($\mathbb{R}^{7}$), and joint angular velocities ($\mathbb{R}^{7}$), and the control input has $\mathbb{R}^{7}$ joint torques. This dataset consists of 3100 time steps of (randomly generated) control inputs and the corresponding robot states. 

To examine how dataset size affects performance, we adjust the dataset size to $d = 100, 2000$ and $3000$ samples. For each size, we randomly select five subsets from the whole dataset, resulting in totally 15 subsets. We then train LDSs, i.e., the system matrix $A$ and the control matrix $B$, using different approaches on all 15 subsets. 



We first evaluate the long-horizon prediction accuracy of each approach. For this, we first command the random control inputs in simulation with ground-truth system dynamics to generate test trajectories. We then rollout the trained $A$ and $B$ matrices with the same control inputs, and then compute the average $\mathscr{l}_2$ distance between the predicted and ground-truth end-effector positions for each method. From the results shown in Table.~\ref{tab_distance}, we observe that while \textit{SC} performs worse than \textit{CG} and \textit{SOC} when $d=100$, its performance becomes comparable as $d$ increases. Notably, \textit{SC} achieves this with significantly lower computation time and memory usage.

Additionally, we use the trained $A$ and $B$ matrices to construct an LQR controller, and use such controller to track a figure-8 pattern in simulation (similar in~\cite{mamakoukas2020memory}). How accurately the figure is tracked indicates how well the learned LDS aligns with the actual dynamical system with control inputs. For each approach, we repeat this process five times with different initial positions of the end effector. We show the qualitative tracking performances in Fig.~\ref{fig:franka_pos_sc} of Appendix.~\ref{appendix:stability-expresivity tradeoff}, where \textit{SC} demonstrates performance comparable to that of other methods.

\begin{table}[!thb]
\centering
 \resizebox{0.75\linewidth}{!}{
\begin{tabular}{@{}lcccccc@{}}
\toprule
 Dataset size & \textit{LS} & \textit{CG} & \textit{WLS} & \textit{SOC} & \textit{SC}
 \\ \toprule
$d=100$ & $0.0748 \pm 0.1218$ & $0.0166 \pm 0.0066$ & $0.3390 \pm 0.2059$ & $0.0225 \pm 0.0131$ & $0.0693 \pm 0.1370$\\
$d=2000$ & $0.0207 \pm 0.0173$ & $0.0154 \pm 0.0058$ & $0.3051 \pm 0.1853$ & $0.0154 \pm 0.0058$ & $0.0183 \pm 0.0138$\\
$d=3000$ & $0.0164 \pm 0.0083$ & $0.0155 \pm 0.0058$ & $0.2845 \pm 0.1501$ & $0.0155 \pm 0.0058$ & $0.0156 \pm 0.0057$\\
 \bottomrule
\end{tabular}
}
\captionsetup{skip=2pt}
\caption{Average prediction error of the end-effector's position (unit: $m$).}
\label{tab_distance}
\end{table}

\section{Details of \textit{DexManip} tasks}\label{sec:kodex}
\textit{DexManip} is a dataset collected when using Adroit Hand to conduct four different dexterous manipulation tasks in MUJOCO simulation environment. In this dataset, the states of hand and states of object is captured, concatenated and lifted to a higher-dimension Hilbert space at each time step (see~\cite{han2023KODex}), so that the hand-object system becomes an LDS in the Hilbert space. Thereafter, we use a simple PD controller to track the generated hand trajectories in the simulator. The tasks are detailed as follows.

\noindent\textit{Pen Reorientation:} This task is mainly about orienting a pen to a random target rotation angle. The hand and object states are lifted to a higher-dimensional $z$ states and different approaches are used to learn the LDSs in the $z$ state space. This dataset consists of 131 rollouts that pens are reoriented to the target rotation angles successfully ($T=100$). At each time step $t$, if $\mathrm{o}^{\text{goal}} \cdot \mathrm{o}^{\text{pen}}_t > 0.90$ ($\mathrm{o}^{\text{goal}} \cdot \mathrm{o}^{\text{pen}}_t$ measures orientation similarity), then we have $\rho(t) = 1$. This task is considered i) successful if $\sum_{t=1}^T \rho(t) > 10$, and ii) safe if the pen remains in the robot hand at the final time step (i.e., it is not dropped).

\noindent\textit{Door Opening:} This task is mainly about opening a door from a random starting position. The hand and object states are lifted to a higher-dimensional $z$ states and different approaches are used to learn the LDSs in the $z$ state space. This dataset contains 158 rollouts in which the doors are fully opened ($T=70$). The task is considered i) successful if at last time step $T$, the door opening angle is larger than 1.35 rad, and ii) safe if the robot hand is still near the door (i.e., the distance is smaller than 0.5m) at final time step, indicating that the robot hand does not drifts away. 

\noindent\textit{Tool Use:} This task is mainly about picking up the hammer to drive the nail into the board placed at some height. The hand and object states are lifted to a higher-dimensional $z$ states and different approaches are used to learn the LDSs in the $z$ state space. There are two types of datasets:
\begin{itemize}       
    \item \textit{Tool}: This dataset consists of 200 rollouts that continues after the nail is driven into the board, so that the hammer stays stable in the air ($T=100$). This is the original setting proposed in \cite{Rajeswaran-RSS-18}.
    \item \textit{Tool\_short}: This dataset consists of 200 rollouts that end right after the nail is driven in ($T=35$).
\end{itemize}


This task is considered i) successful if at last time step (note that during evaluation, the policy will always run for 100 time steps, similar to the \textit{Tool} dataset), the Euclidean distance between the final nail position and the goal nail position is smaller than 0.01m, and ii) safe if the final distance from the hammer to the goal nail position is smaller than 0.3m, indicating that the robot hand neither drifts away nor throws away the hammer.

\noindent\textit{Object Relocation:} This task is mainly about moving a cylinder to a randomized target location. Similarly, all approaches are operated in the higher-dimensional $z$ state space. The two types of datasets are:

\begin{itemize}
    \item \textit{Reloc}: This dataset consists of 177 rollouts that objects are all moved to the target position successfully in the end ($T=100$).
    \item \textit{Reloc\_with\_failures}: In addition to the \textit{Reloc} dataset, this one also includes the failure cases where the objects are failed to move to the target position in the end ($T=100$).
\end{itemize}

At each time step, if the Euclidean distance between the current cylinder position and the target position is smaller than 0.10m, then we have $\rho(t) = 1$. This task is considered i) successful if $\sum_{t=1}^{T=100} \rho(t) > 10$, and ii) safe if the robot hand is still near the target position (i.e., the distance is smaller than 0.5m) at final time step, indicating that the robot hand does not drifts away.

\end{document}